\algrenewcommand\algorithmicindent{1.0em}%
\newcommand{\ljmargin}[2]{{\color{cyan}#1}\marginpar{\color{cyan}\raggedright\tiny[LJ]:#2}}
\let\NAT@parse\undefined
\title{\Large \bf Revisiting the Asymptotic Optimality of RRT*}
\author{Kiril Solovey$^1$, Lucas Janson$^2$, Edward Schmerling$^3$, Emilio Frazzoli$^4$, and Marco Pavone$^1$
\thanks{$^{1}$ Department of Aeronautics and Astronautics, Stanford University, CA, USA.}%
\thanks{$^{2}$ 
Department of Statistics, Harvard University, MA, USA.}%
\thanks{$^{3}$ 
Waymo Research, CA, USA.}%
\thanks{$^{4}$ 
Institute for Dynamic Systems and Control, ETH Zurich, Switzerland.}%
}
\begin{document}

\maketitle
\thispagestyle{empty}
\pagestyle{empty}

\newcommand{\ignore}[1]{}

\def\vor{\text{Vor}}

\def\P{\mathcal{P}} \def\C{\mathcal{C}} \def\H{\mathcal{H}}
\def\F{\mathcal{F}} \def\U{\mathcal{U}} \def\L{\mathcal{L}}
\def\O{\mathcal{O}} \def\I{\mathcal{I}} \def\S{\mathcal{S}}
\def\G{\mathcal{G}} \def\Q{\mathcal{Q}} \def\I{\mathcal{I}}
\def\T{\mathcal{T}} \def\L{\mathcal{L}} \def\N{\mathcal{N}}
\def\V{\mathcal{V}} \def\B{\mathcal{B}} \def\D{\mathcal{D}}
\def\W{\mathcal{W}} \def\R{\mathcal{R}} \def\M{\mathcal{M}}
\def\X{\mathcal{X}} \def\A{\mathcal{A}} \def\Y{\mathcal{Y}}
\def\L{\mathcal{L}}

\def\dS{\mathbb{S}} \def\dT{\mathbb{T}} \def\dC{\mathbb{C}}
\def\dG{\mathbb{G}} \def\dD{\mathbb{D}} \def\dV{\mathbb{V}}
\def\dH{\mathbb{H}} \def\dN{\mathbb{N}} \def\dE{\mathbb{E}}
\def\dR{\mathbb{R}} \def\dM{\mathbb{M}} \def\dm{\mathbb{m}}
\def\dB{\mathbb{B}} \def\dI{\mathbb{I}} \def\dM{\mathbb{M}}
\def\dZ{\mathbb{Z}}
\def\E{\mathbf{E}} 

\def\eps{\varepsilon}

\def\limn{\lim_{n\rightarrow \infty}}

\newcommand{\ch}{\mathrm{ch}}
\newcommand{\pspace}{{\sc pspace}\xspace}
\newcommand{\threesum}{{\sc 3Sum}\xspace}
\newcommand{\np}{{\sc np}\xspace}
\newcommand{\degree}{\ensuremath{^\circ}}
\newcommand{\argmin}{\operatornamewithlimits{argmin}}

\newcommand{\dist}{\textup{dist}}

\newtheorem{lemma}{Lemma}
\newtheorem{theorem}{Theorem}
\newtheorem{corollary}{Corollary}
\newtheorem{claim}{Claim}
\newtheorem{proposition}{Proposition}

\theoremstyle{definition}
\newtheorem{definition}{Definition}
\newtheorem{remark}{Remark}
\theoremstyle{plain}
\newtheorem{observation}{Observation}

\def\aas{a.a.s.\xspace}
\def\0{\bm{0}}

\def\EE{\mathfrak{E}}

\def\todo#1{\textcolor{blue}{\textbf{TODO:} #1}}
\def\new#1{\textcolor{magenta}{#1}}
\def\kiril#1{\textcolor{magenta}{\textbf{Kiril:}}}
\def\removed#1{\textcolor{green}{#1}}

\def\dx{\,\mathrm{d}x}
\def\dy{\,\mathrm{d}y}
\def\drho{\,\mathrm{d}\rho}

\newcommand{\prm}{{\tt PRM}\xspace}
\newcommand{\prmstar}{{\tt PRM}$^*$\xspace}
\newcommand{\rrt}{{\tt RRT}\xspace}
\newcommand{\est}{{\tt EST}\xspace}
\newcommand{\grrt}{{\tt GEOM-RRT}\xspace}
\newcommand{\rrtstar}{{\tt RRT}$^*$\xspace}
\newcommand{\rrg}{{\tt RRG}\xspace}
\newcommand{\btt}{{\tt BTT}\xspace}
\newcommand{\fmt}{{\tt FMT}$^*$\xspace}
\newcommand{\dfmt}{{\tt DFMT}$^*$\xspace}
\newcommand{\dprm}{{\tt DPRM}$^*$\xspace}
\newcommand{\mstar}{{\tt M}$^*$\xspace}
\newcommand{\drrtstar}{{\tt dRRT}$^*$\xspace}
\newcommand{\sst}{{\tt SST}\xspace}
\newcommand{\aorrt}{{\tt AO-RRT}\xspace}

\newcommand{\xmin}{x_{\text{min}}}
\newcommand{\Xnear}{X_{\text{near}}}
\newcommand{\xinit}{x_{\text{init}}}
\newcommand{\xgoal}{x_{\text{goal}}}
\newcommand{\xnew}{x_{\text{new}}}
\newcommand{\xnear}{x_{\text{near}}}
\newcommand{\xrand}{x_{\text{rand}}}
\newcommand{\xparent}{x_{\text{parent}}}
\newcommand{\cmin}{c_{\text{min}}}

\newcommand{\samplefree}{\textsc{sample-free}}
\newcommand{\nearest}{\textsc{nearest}}
\newcommand{\near}{\textsc{near}}
\newcommand{\steer}{\textsc{steer}}
\newcommand{\collision}{\textsc{collision-free}}
\newcommand{\cost}{\textsc{cost}}
\newcommand{\parent}{\textsc{parent}}

\begin{abstract}
RRT$\bm{^*}$ is one of the most widely used sampling-based algorithms for asymptotically-optimal motion planning. RRT$\bm{^*}$  laid the foundations for optimality in motion planning as a whole, and inspired the development of numerous new algorithms in the field, many of which build upon RRT$\bm{^*}$ itself. In this paper, we first identify a logical gap in the optimality proof of RRT$\bm{^*}$, which was developed by Karaman and Frazzoli (2011).  Then, we present an alternative and mathematically-rigorous proof for asymptotic optimality. Our proof suggests that the connection radius used by RRT$\bm{^*}$ should be increased from $\bm{\gamma \left(\frac{\log n}{n}\right)^{1/d}}$ to  $\bm{\gamma' \left(\frac{\log n}{n}\right)^{1/(d+1)}}$ in order to account for the additional dimension of time that dictates the samples' ordering. Here $\bm{\gamma, \gamma'}$ are constants, and $\bm{n, d}$ are the number of samples and the dimension of the problem, respectively.
\end{abstract}

\section{Introduction}\label{sec:introduction}
For many robot motion-planning applications, feasibility is not enough---we further desire path plans that are of high quality, 
reflecting a need for robots that can achieve their goals with efficiency, alacrity, and economy of motion. To this end we seek planning algorithms that can be trusted, whatever obstacle environment a robot faces, to produce optimal or near-optimal plans with minimal scenario-specific tuning. The advent of the asymptotically-optimal rapidly-exploring random tree (\rrtstar) algorithm~\cite{KF11} has ushered in a decade of theoretical and practical successes in the development of optimal sampling-based motion-planning algorithms. 

Although proposed in its initial form for the case of minimum-length path planning for robots without dynamic constraints, \rrtstar has been extended to handle kinodynamic planning problems~\cite{KaramanFrazzoli2010} including robotic systems governed by non-holonomic constraints~\cite{KaramanFrazzoli2013}, more expressive costs accounting for robot energy expenditure~\cite{GoretkinPerezEtAl2013,WebbBerg2013}, and even to plan paths that minimize violation of safety rules~\cite{ReyesCastroChaudhariEtAl2013} or that otherwise balance performance considerations with safety constraints~\cite{LiuAng2014}. Heuristic modifications to the core algorithm have also been demonstrated that improve practical \rrtstar implementations~\cite{AkgunStilman2011,GammellSrinivasaEtAl2014}. 

Each of these extensions leverages the simple yet powerful iterative local graph-rewiring technique introduced by \rrtstar to enable convergence to the optimal solution (as computation budget increases), provided an appropriate choice for the scaling of the rewiring radius as a function of sample count. Moreover, each of these extensions draws upon the original analysis presented in~\cite{KF11} for the fundamental asymptotic scaling of this algorithm parameter; this analysis is therefore core to each of their optimality guarantees.\vspace{5pt}

\noindent\emph{Contribution.} The primary contribution of this paper is an in-depth study of the theoretical analysis underpinning the asymptotic-optimality criterion for the \rrtstar algorithm. In revisiting this analysis, we identify a logical gap in the original proof and provide an amended proof suggesting a larger
radius scaling exponent to ensure asymptotic optimality. The impact of this paper is potentially far-reaching in the large number of works that currently appeal to \rrtstar optimality to make their theoretical guarantees.

The paper is organized as follows. Section~\ref{sec:preliminaries} provides preliminaries and a description of \rrtstar. In Section~\ref{sec:wrong} we review the original optimality proof of \rrtstar and identify a logical gap within it. In Section~\ref{sec:main} we provide the main contribution of this paper, which is an alternative proof that circumvents this logical gap. We conclude the paper in Section~\ref{sec:conclusion}.

\section{Preliminaries}\label{sec:preliminaries}
We provide several basic definitions that will be used throughout the
paper. Given two points $x,y\in \dR^d$, denote by $\|x-y\|$ the
standard Euclidean distance. Denote by $\B_{r}(x)$ the $d$-dimensional
ball of radius $r>0$ centered at $x\in \dR^d$. Define
$\B_{r}(\Gamma) := \bigcup_{x \in \Gamma}\B_{r}(x)$ for any
$\Gamma \subseteq \dR^d$.  Similarly, given a curve
$\sigma:[0,1]\rightarrow \dR^d$, define
$\B_r(\sigma)=\bigcup_{\tau\in[0,1]}\B_r(\sigma(\tau))$. For a subset
$D\subset \dR^d$, $|D|$ denotes its Lebesgue measure. All
logarithms used herein are to base~$e$. 

\subsection{Motion planning}
Denote by $\C$ the robot's configuration space, and by
$\F\subseteq \C$ the free space, i.e., the set of all collision free
configurations. 
We assume that $\C$ is a subset of the Euclidean space. For simplicity,
let $\C=[0,1]^d\subset\dR^d$ for some fixed $d\geq 2$. 
Given start and target configurations $s,t\in \F$, the \emph{motion-planning} problem
consists of finding a continuous path (curve)
$\sigma:[0,1]\rightarrow \F$ such that $\sigma(0)=s$ and $\sigma(1)=t$. That is, the
robot starts its motion along $\sigma$ at $s$, and ends at $t$, while
avoiding collisions. An instance of the problem is defined by $(\F,s,t)$.  We consider the standard path length as a measure of quality:

\begin{definition}\label{def:length}
  Given a path $\sigma$, its \emph{length} (cost), which corresponds to its 
  Hausdorff measure, is represented by
  \[c(\sigma)=\sup_{n\in \dN_+, 0=\tau_1\leq \ldots \leq \tau_n
    =1}\sum_{i=2}^n\|\sigma(\tau_i)-\sigma(\tau_{i-1})\|.\]
\end{definition}
	
We proceed to describe the notion of \emph{robustness}, which is
essential when discussing theoretical properties of sampling-based planners.
Given a subset $\Gamma\subset \C$ and two configurations
$x,y\in \Gamma$, denote by $\Sigma_{x,y}^\Gamma$ the set of all
continuous paths, whose image is in $\Gamma$, that start in $x$ and
end in $y$, i.e., if $\sigma\in \Sigma_{x,y}^\Gamma$ then
$\sigma:[0,1]\rightarrow \Gamma$ and $\sigma(0)=x,\sigma(1)=y$. We mention that the following definition is slightly different than the one used in~\cite{KF11,JSCP15}.
	
\begin{definition}\label{def:robustly_feasible}
  Let $(\F,s,t)$ be a motion-planning problem. A path
  $\sigma \in \Sigma_{s,t}^\F$ is \emph{robust} if there exists $\delta >0$
  such that $\B_{\delta}(\sigma)\subset \F$.  We also say that $(\F,s,t)$
  is \emph{robustly feasible} if there exists such a robust path.
\end{definition}
	
\begin{definition}\label{def:robust_opt_len}
  The \emph{robust optimum} is defined as
  \[c^*=\inf\left\{c(\sigma)\middle|\sigma \in \Sigma_{s,t}^\F\textup{ is
      robust}\right\}.\]
	\end{definition}
	
\subsection{Algorithms}
While our main focus in this paper is the \rrtstar algorithm, we also rely on the properties of the \rrt algorithm, which is
described first. The following description of the (geometric) \rrt
algorithm is based on~\cite{KL00} and~\cite{KF11}.

\begin{algorithm}
	\caption{\tt RRT($\xinit:=s, \xgoal:=t, n, \eta$)}
	\label{alg:rrt}
	\small
	\begin{algorithmic}[1]
		\State{$V=\{\xinit\}$}
		\For {$j = 1 \text{ to } n$}
		\State $\xrand\gets \samplefree(\;)$
		\State $\xnear\gets \nearest(\xrand, V)$
		\State  $\xnew \gets$ \steer($\xnear, \xrand, \eta$)
		\If {\collision($\xnear, \xnew$)} 
		\State{$V=V\cup \{\xnew\}$; $E=E\cup \{(\xnear, \xnew)\}$}
		\EndIf
		\EndFor	
		\State 		\Return $G=(V,E)$
	\end{algorithmic}
\end{algorithm}

The input for \rrt (Algorithm~\ref{alg:rrt}) is an initial and goal  configurations $\xinit, \xgoal$, number of iterations $n$, and a steering parameter $\eta>0$.  \rrt constructs a tree $G=(V,E)$ by performing $n$ iterations. In each iteration, a new sample $\xrand$ is returned from $\F$ uniformly at random by calling \samplefree. Then, the vertex $\xnear\in V$ that is nearest (according to $\|\cdot\|$) to $\xrand$ is found using \nearest. A new configuration $\xnew\in\X$ is then returned by \steer, such that $\xnew$ is on the line segment between $\xnear$ and $\xrand$, and the distance $\|\xnear - \xnew\|$ is at most $\eta$. Finally, \collision($\xnear, \xnew$) checks whether the straight-line path from $\xnear$ to $\xnew$ is collision free. If so, $\xnew$ is added as a vertex to $G$ and is connected by an edge from $\xnear$. 


We proceed to describe \rrtstar~\cite{KF11} in Algorithm~\ref{alg:rrt_star}. Every \rrtstar iteration begins with an \rrt-style extension. The difference lies in the subsequent lines. First, \rrtstar attempts to connect the tree to  $\xnew$ from all its neighbors in $V$ within a $\min\{r(|V|),\eta\}$ vicinity (lines~7-15). Notice that the expression $r(|V|)$ determines the radius based on the current number of vertices in $V$. 
(The operation $\near(\xnew, V, \min\{r(|V|), \eta\})$ returns the subset $V\cap \B_{\min\{r(|V|), \eta\}}(\xnew)$, i.e., the vertices that are within a distance of $\min\{r(|V|), \eta\}$ from $\xnew$.) However, it only adds a single edge to $\xnew$ from the neighbor $\xmin\in \Xnear$ such that $\cost(\xnew)$ is minimized (line~16). In the next step, \rrtstar attempts to perform rewires (lines 17-21): with the addition of $\xnew$, it may be beneficial to reroute the existing path of $\xnear$ to use $\xnew$. \rrtstar checks whether changing the parent of $\xnear$ to be $\xnew$ reduces $\cost(\xnear)$. ($\parent(\xnear)$ returns the immediate predecessor of $\xnear$ in $G$. $\cost(x)$ for $x\in V$ returns the cost of the path leading from $\xinit$ to $x$ in $G$.)

\begin{algorithm}
	\caption{\tt RRT$^*$($\xinit:=s, \xgoal:=t, n, r, \eta$)}
	\small
	\label{alg:rrt_star}
	\begin{algorithmic}[1]
		\State{$V=\{\xinit\}$}
		\For {$j = 1 \text{ to } n$}
		\State $\xrand\gets \samplefree(\;)$
		\State $\xnear\gets \nearest(\xrand, V)$
		\State  $\xnew \gets$ \steer($\xnear, \xrand, \eta$)
		\If {\collision($\xnear, \xnew$)} 
        \State $\Xnear=\near(\xnew, V, \min\{r(|V|), \eta\})$
        \State $V=V\cup \{\xnew\}$
        \State $\xmin=\xnear$
        \State $\cmin=\cost(\xnear)+\|\xnew - \xnear\|$
        \For {$\xnear \in \Xnear$}
        \If {$\collision(\xnear, \xnew)$}
        \If {$\cost(\xnear)+\|\xnew - \xnear\|<\cmin$}
        \State $\xmin=\xnear$ 
        \State $\cmin=\cost(\xnear)+\|\xnew - \xnear\|$
        \EndIf
        \EndIf
        \EndFor
        \State $E=E\cup \{(\xmin, \xnew)\}$
        \For {$\xnear \in \Xnear$}
        \If {$\collision(\xnew, \xnear)$}
        \If {$\cost(\xnew)+\|\xnear - \xnew\|<\cost(\xnear)$}
        \State $\xparent =\parent(\xnear) $
        \State $E=E\cup \{(\xnew, \xnear)\}\setminus\{(\xparent,\xnear)\}$
        \EndIf
        \EndIf
        \EndFor
		\EndIf
		\EndFor	
		\State 		\Return $G=(V,E)$
	\end{algorithmic}
\end{algorithm}

\begin{remark}\label{remark:steering}
  As mentioned above, \rrtstar performs extensions of the tree in a manner similar to \rrt. That is, \steer\ generates $\xnew$, which lies on the straight line connecting $\xnear,\xrand$, such that $\|\xnew-\xnear\|\leq \eta$. Note that initially $\xnew \neq \xrand$, but once the space is sufficiently covered by $G$, i.e., when $\F\subset \bigcup_{v\in V}\B_\eta(v)$,
then in all the following iterations it will hold that 
  $\xnew=\xrand$. This property will be important in the analysis of
  \rrtstar, as it indicates that  $\xnew$ is uniformly sampled from $\F$. This notion will be formalized below. For now, it is useful to note that given the same sequence of samples, \rrt and \rrtstar will generate two (possibly distinct) graphs that have a common vertex set. 
\end{remark}

\section{Original optimality proof}\label{sec:wrong}
In this section we review the original proof~\cite{KF11} for asymptotic optimality of \rrtstar, and point out a logical gap. Specifically, Theorem~38 in~\cite{KF11}  states that if the connection radius used by \rrtstar is of the form
\begin{align}
r^{\text{KF}}(n)=\gamma^{\text{KF}}\left(\frac{\log n}{n}\right)^{1/d},
\label{eq:wrong-r(n)}
\end{align}
where $n\in \dN_+$, and for some constant $\gamma^{\text{KF}}>0$, the cost of the solution obtained by \rrtstar converges to the robust optimum~$c^*$ as $n\to \infty$, almost surely. 

\begin{figure*}[!th]
    \vspace{5pt}
  \begin{center}
    \includegraphics[width=1.4\columnwidth, page=5]{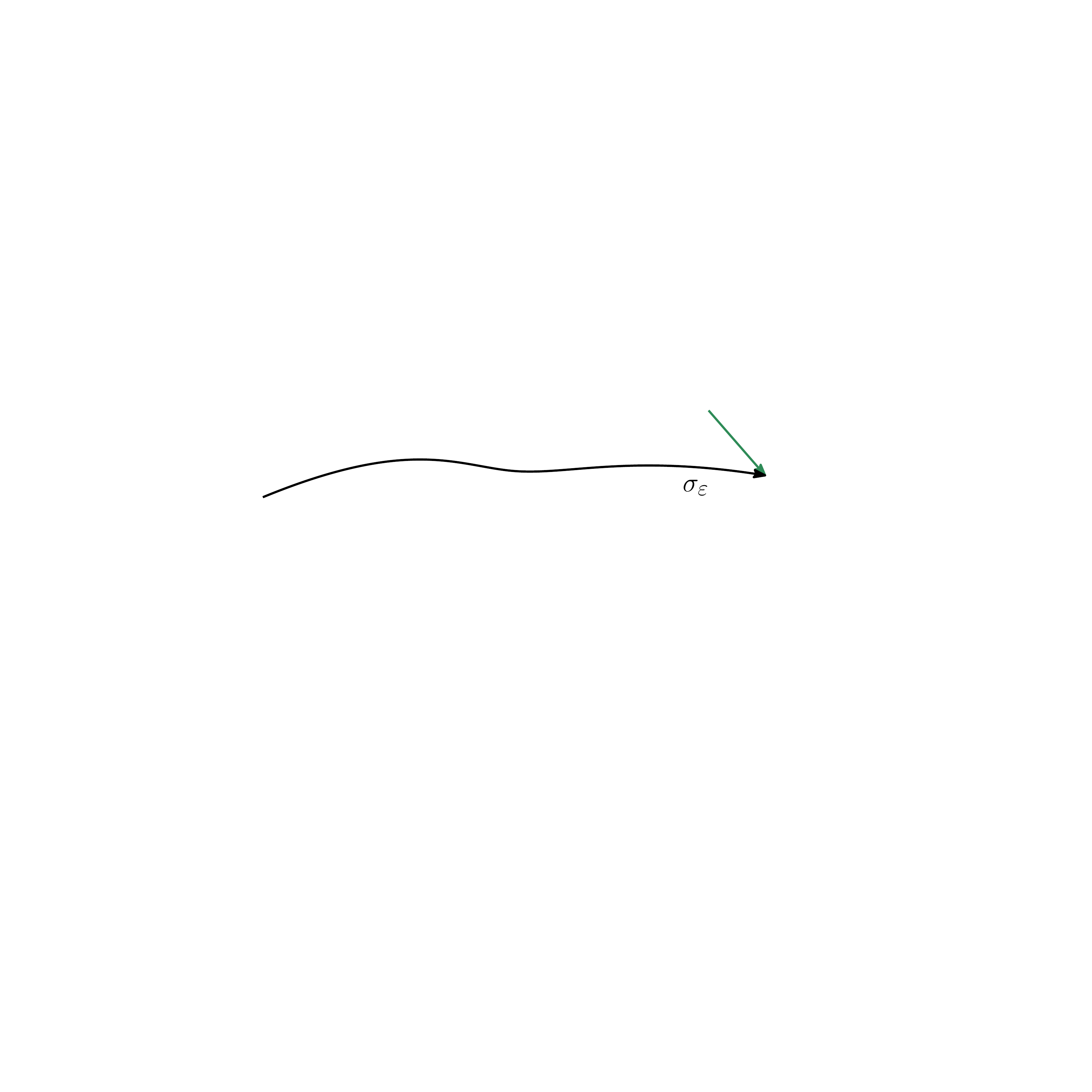}
  \end{center}
  \caption{Illustration of the components in the original proof~\cite{KF11}. (a) The robustly-optimal path $\sigma_\eps$ is drawn as a black curve. (b) Discs represent the balls $B_{n,1},\ldots, B_{n,M_n}$, whose centers are denoted as red bullets along $\sigma_\eps$. The path $\sigma_n$ connecting samples between adjacent balls in an increasing order is illustrated as a blue curve. (c) A problematic scenario (Section~\ref{sec:issue}), where the \rrtstar tree $G$ yields a suboptimal solution, is depicted in green.}
  \label{fig:original}
  \vspace{-10pt}
\end{figure*}

\subsection{Review of previous proof}
We provide a sketch of the original proof and identify a logical gap. We mention that our definitions of robustness (Definition~\ref{def:robustly_feasible}) and robust optimum (Definition~\ref{def:robust_opt_len}) are  simplified versions of the ones used originally in~\cite{KF11}, where the latter are slightly less convenient to work with (especially in correction of the proof which we give in Section~\ref{sec:main}).  We thus adapt the original proof details presented in this section to our setting. We emphasize that the logical gap is unrelated to those definitions, and our argument presented below can be easily remapped to the original formulation.

Recall that the sample set of \rrtstar consists of $n$ time-labeled configurations. Denote by $\{X_1,\ldots,X_n\}$ the sample set, where indices denote the order in which the samples are drawn.  Fix $\eps>0$ and let $\sigma_\eps$ be a robust solution path such that $c(\sigma_\eps)\leq (1+\eps)c^*$. The proof constructs a sequence of $M_n\leq n$ identical balls $B_{n,1},\ldots, B_{n,M_n}$ that are centered on some equally-spaced points along $\sigma_\eps$. The size and spacing of balls is set so that (a) $\sigma_\eps$ is completely covered by them, (b) $\bigcup_{i=1}^{M_n} B_{n,i}\subseteq \F$, and (c) for every $1\leq i \leq M_n, x\in B_{n,i}, x'\in B_{n,i+1}$ it holds that $\|x-x'\|\leq r^{\text{KF}}(n)\leq r^{\text{KF}}(|V|)$. Furthermore, it is shown in~\cite{KF11} that given $x_i\in B_{n,i}$ for every $1\leq i\leq M_n$, the length of the path $\sigma$ connecting each $x_i$ to the point in the next ball with a straight line converges (as $n\rightarrow \infty$) to the length of $\sigma_\eps$ (see Figure~\ref{fig:original}).

The proof establishes that if for every $1\leq i < M_n$ there exist $X_{j_i},X_{j_{i+1}}$ such that (i) $X_{j_i}\in B_{n,i},X_{j_{i+1}}\in B_{n,i+1}$ and (ii) $j_i<j_{i+1}$, then \rrtstar is asymptotically optimal (see Section~G.3 in~\cite{KF11}). The rationale behind these conditions is as follows. 
Condition (i) makes sure that the optimal path is approximated by samples drawn by \rrtstar, i.e., for every point along $\sigma_\eps$ there is a sample point in its vicinity. Condition (ii)  ensures that \rrtstar will have the opportunity to add a directed edge from $X_{j_i}$ to $X_{j_{i+1}}$: as $X_{j_{i+1}}$ is sampled after $X_{j_i}$ then \rrtstar would consider drawing a directed edge from the latter to the former, considering the fact that  $X_{j_{i}}\in\near(X_{j_{i+1}}, V, \min\{r(n), \eta\})$ (this is formalized in Claim~\ref{clm:connection} below). Observe that $r(n)$ is used as a conservative lower-bound for $r(|V|)$ throughout~\cite{KF11}, as we do too.

Consequently, the proof deduces that if these conditions are met \rrtstar is guaranteed to find  a  solution with cost at most $c^*(1+\eps)$ with probability that converges to $1$ as $n\rightarrow\infty$. In particular, denote by  $X_{j_1},\ldots,X_{j_{M_n}}$ the sequence of samples satisfying the conditions above, and let $\sigma_n$ be a path that is induced by those $M_n$ samples in the prescribed order. Then the claim is that the solution returned by \rrtstar is of length $c(\sigma_n)$, if not shorter. 

\subsection{A logical gap}\label{sec:issue}
We identify an issue with the proof technique described above, and in particular with the conditions (i) and (ii). \textbf{We assert that the line of reasoning mentioned above overlooks the fact that the existence of pairwise sequential samples does not directly imply the existence of a whole chain of samples with a proper ordering such that a path in $\bm{G}$ traces through all the balls in sequence.} That is, the fact that for every $1\leq i < M_n$ (i) there exist $X_{j_i},X_{j_{i+1}}$ such that $X_{j_i}\in B_{n,i},X_{j_{i+1}}\in B_{n,i+1}$ and (ii)  $j_i<j_{i+1}$, does not necessarily mean that (iii) there exists a sequence $j_1\leq j_2\leq \ldots \leq j_{M_n}$ such that $X_{j_i}\in B_{n,i}$ for every $1\leq i < M_n$; (iii) is a sufficient (but not necessary) condition for recovering a path that is at least as good as $\sigma_n$. 

Consider for instance the case where $X_{j_i}\in B_{n,i}, X_{j_{i+1}}\in B_{n,i+1}, X_{j'_{i+1}}\in B_{n,i+1}, X_{j'_{i+2}}\in B_{n,i+2}$ and $j_i<j_{i+1}, j'_{i+1}<j'_{i+2}$, but $j'_{i+2} < j_{i}$, where there are two points $X_{j_{i+1}},X_{j'_{i+1}}$ that fall into the same ball $B_{n,i+1}$ (see Figure~\ref{fig:original} (c)). 
Define $X_1^{j_i-1}=\{X_1,\ldots, X_{j_i - 1}\}, B_1^{i-1}=\bigcup_{k=1}^{i-1}B_{n,k}$, and let $X^B=X_1^{j_i-1}\cap B_1^{i-1}\cap \B_{r(j_i)^{\text{KF}}}(X_{j_i})$. Namely, $X^B$ contains all the sampled points that were drawn before $X_{j_i}$, which lie in previous balls along $\sigma_\eps$, and whose distance from $X_{j_i}$ is at most $r(j_i)^{\text{KF}}$. 

Now assume that $X^B=\emptyset$. We can choose the current structure of $G$ and the locations of $X_{j_i}, X_{j'_{i+1}}$ such that the only directed edge that is added in iteration $j_i$ is $(X_{j'_{i+1}},X_{j_i})$, i.e., from $X_{j'_{i+1}}$ to $X_{j_i}$ (rather than the other way around). Note that in iteration $j_{i+1}$ the addition of sample $X_{j_{i+1}}$ would not resolve this problematic wiring since the latter sample will be connected by a directed edge either from $X_{j_i}$ or $X_{j'_{i+1}}$. Moreover, we can repeat this argument for preceding balls to yield a long chain of samples that are connected in the opposite direction.

In this discussion it is important to keep in mind that \rrtstar performs rewiring (i.e., changing the predecessor of a given vertex) only locally (lines 17-21 of Algorithm~\ref{alg:rrt_star}). That is, in order to force a rewiring of a given vertex $X_j$ \rrtstar must sample a vertex $\xnew$ in the vicinity of $X_j$, and this rewiring would not cause a chain of rewires for $X_j$s predecessors or successors in $G$. Consequently, in order to reverse the direction of the aforementioned chain from $X_{j'_{i+1}}$, \rrtstar would need to sample new vertices along the chain in the correct order. For a more detailed example see the appendix. 

As we show in our proof in the next section, condition (iii) is in fact sufficient to guarantee asymptotic optimality, and we prove that it indeed holds with high probability when we slightly increase the connection radius from Equation~\eqref{eq:wrong-r(n)}, and modify the constant $\gamma^{\text{KF}}$.


\section{Alternative proof}
\label{sec:main}
In order to account for the additional dimension of time, we set the connection radius to be $r(n)=\gamma\left(\frac{\log n}{n}\right)^{\frac{1}{d+1}}$, where $\gamma$ is a constant that will be determined below. We state our main theorem and provide an overview of the proof. The full proof is presented later on. Note that our result suggests that the exponent should be decreased from $1/d$ to $1/(d+1)$, which yields a larger radius overall. 
Denote by $\sigma_n$ the path connecting $s$ to $t$ returned by \rrtstar after $n$ iterations. Recall that $c(\sigma_n)$ denotes its length (in case that no solution is found, the length of $\sigma_n$ is assumed to be $\infty$). 
Our main theorem, which appears below, states that if $\gamma$ is set correctly, then the cost of the solution returned by \rrtstar is upper-bounded asymptotically by $(1+\eps)c^*$, where $c^*$ is the robust optimum, and $\eps$ is a tuning parameter. Additional tuning parameters that appear in the theorem are as follows:  $\eta$ is the steering size of \rrtstar (Algorithm~\ref{alg:rrt_star}, line~5), while $\mu$ and $\theta$ are constants whose purpose will become clear in the proof of the theorem. 

\begin{theorem}\label{thm:main}
  Suppose that $(\F,s,t)$ is robustly feasible and fix $\eta>0$, $\eps\in (0,1)$,\footnote{For simplicity, we upper-bound $\eps$ with $1$ although the proof can be adapted to accommodate larger stretch factors.} $\theta\in (0,1/4)$, and $\mu\in (0,1)$. Define the radius of \rrtstar to be
  \begin{align}
  r(n)& =\gamma \left(\frac{\log n}{n}\right)^{\frac{1}{d+1}},\label{eq:r_n}
  \end{align}
  such that 
  \begin{align}
\gamma & \geq (2+\theta)\left(\frac{(1+\eps/4)c^*}{(d+1)\theta (1-\mu)}\cdot \frac{|\F|}{\zeta_d}\right)^{\frac{1}{d+1}}, \label{eq:gamma}
  \end{align}
   where $\zeta_d$ is the volume of a unit $d$-dimensional hypersphere. Then 
\[\limn \Pr[c(\sigma_n)\leq (1+\eps)c^*]=1.\]
\end{theorem}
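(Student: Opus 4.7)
The plan is to retain the high-level architecture of the argument in~\cite{KF11}---cover a near-optimal robust path by a chain of small balls and extract a tree-path from the RRT$^*$ graph running through them---but to replace the flawed step with a direct probabilistic proof of condition (iii) from Section~\ref{sec:issue}, using the enlarged connection radius to afford the necessary slack. First I would fix $\eps\in(0,1)$, $\theta\in(0,1/4)$, $\mu\in(0,1)$, and choose a robust path $\sigma_\eps$ with $c(\sigma_\eps)\leq(1+\eps/4)c^*$. Along $\sigma_\eps$ I would place equally-spaced centers $y_{n,1},\ldots,y_{n,M_n}$ at consecutive distance $\theta r(n)/(2+\theta)$, and set $B_{n,i}=\B_{r(n)/(2+\theta)}(y_{n,i})$. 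With this scaling, (a) $\sigma_\eps\subset\bigcup_i B_{n,i}$, (b) every pair of points in adjacent balls is within distance $r(n)$, and (c) for $n$ large robustness of $\sigma_\eps$ guarantees $B_{n,i}\subset\F$. Counting gives $M_n=\Theta\bigl((n/\log n)^{1/(d+1)}\bigr)$.

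The heart of the argument---and the main departure from~\cite{KF11}---is a batching construction that produces the monotone-indexed chain required by condition (iii). I would partition the sample sequence $X_1,\ldots,X_n$ into $M_n$ contiguous batches of size $N_n=\lfloor n/M_n\rfloor$, and let $A_i$ denote the event that the $i$-th batch contains at least one sample in $B_{n,i}$. Since the samples are i.i.d.\ uniform on $\F$,
\[\Pr[A_i^c]\leq\left(1-\tfrac{|B_{n,i}|}{|\F|}\right)^{N_n}\leq\exp\!\left(-\tfrac{\zeta_d\,\theta}{(2+\theta)^{d+1}}\cdot\tfrac{n\,r(n)^{d+1}}{(1+\eps/4)\,c^*\,|\F|}\right).\]
Substituting $n\,r(n)^{d+1}=\gamma^{d+1}\log n$ and applying the lower bound on $\gamma$ from Equation~\eqref{eq:gamma}, the exponent is at least $\log n/((d+1)(1-\mu))$; since $\log M_n\sim \log n/(d+1)$, a union bound yields $\Pr\!\left[\bigcap_i A_i\right]\to 1$. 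On this event one extracts indices $j_1<\cdots<j_{M_n}$ with $X_{j_i}\in B_{n,i}$ for every $i$---exactly condition (iii) that was unjustified in the original proof.

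Conditioned on $\bigcap_i A_i$, the RRT$^*$ wiring logic propagates cost along the chain. By Remark~\ref{remark:steering}, for $n$ large each sample is inserted verbatim ($\xnew=\xrand$); when $X_{j_{i+1}}$ is processed, $X_{j_i}$ already lies in $V$ and satisfies $\|X_{j_i}-X_{j_{i+1}}\|\leq r(n)\leq r(|V|)$, so $X_{j_i}\in\near(X_{j_{i+1}},V,\min\{r(|V|),\eta\})$. Lines~11--16 of Algorithm~\ref{alg:rrt_star} then force $\cost(X_{j_{i+1}})\leq\cost(X_{j_i})+\|X_{j_i}-X_{j_{i+1}}\|$, which telescopes to yield a tree-path whose total length is at most $\sum_i\|X_{j_i}-X_{j_{i+1}}\|$, plus $o(1)$ endpoint corrections to attach $s$ and $t$ through the first and last balls. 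Standard geometric estimates of the type used in~\cite{KF11} then show this sum is at most $(1+\eps/2)\,c(\sigma_\eps)\leq (1+\eps)c^*$ once $n$ is sufficiently large, finishing the proof.

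The main obstacle is getting the batching calculation tight: the enlarged ball volume $\sim r(n)^d$, the reduced ball count $M_n\sim r(n)^{-1}$, and the batch size $N_n=n/M_n$ must conspire so that the per-batch failure exponent $|B_{n,i}|\,N_n/|\F|\sim\log n$ with constant strictly exceeding $1/(d+1)$. This balancing holds \emph{exactly} at the exponent $1/(d+1)$ (rather than $1/d$), which is the quantitative reason an extra $\log$-factor is needed to pay for the time dimension that governs sample order and was omitted in the original analysis. The constants $\theta$, $\mu$ and the prefactor $(2+\theta)$ appearing in Equation~\eqref{eq:gamma} are precisely what drop out of optimizing this exponent subject to the ball-covering, adjacent-ball-distance, and union-bound constraints above.
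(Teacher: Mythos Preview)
Your high-level architecture---time-batching the sample sequence into $M_n$ contiguous windows to force the monotone chain $j_1<\cdots<j_{M_n}$---is exactly the paper's strategy (their event $\EE^2$), and your union-bound calculation is essentially Lemma~\ref{lem:ee2}. However, two substantive pieces are missing.

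First, you assert that ``the samples are i.i.d.\ uniform on $\F$'' and later that ``by Remark~\ref{remark:steering}, for $n$ large each sample is inserted verbatim.'' Neither is justified: the vertices $\xnew^j$ are \emph{not} uniform because of steering, and Remark~\ref{remark:steering} only says $\xnew=\xrand$ once the tree already covers $\F$ at scale $\eta$, which is itself a random event that need not hold throughout your first batch. The paper handles this by reserving the first $n'=\mu n$ iterations as a warm-up (event $\EE^1$, Lemma~\ref{lem:ee1}), invoking an \rrt coverage result to show that after $n'$ steps every $\xrand$ landing near $\sigma_\eps$ equals its $\xnew$; only the remaining $(1-\mu)n$ samples are then batched into the windows $T_1,\ldots,T_{M_n}$. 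This is precisely the role of the parameter $\mu$ in the theorem statement, which your construction leaves unused (the factor $1-\mu$ surfaces in your exponent only through~\eqref{eq:gamma}, not through the batching where it belongs).

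Second, and more seriously, your final cost step does not go through. With samples guaranteed only to lie in the large balls $B_{n,i}$ of radius $r(n)/(2+\theta)$ while the centers are spaced $\theta r(n)/(2+\theta)$ apart, the telescoping sum satisfies merely
\[
\sum_i\|X_{j_{i+1}}-X_{j_i}\|\le (M_n-1)\,r(n)\approx c(\sigma_\eps)\,\frac{2+\theta}{\theta},
\]
which for $\theta<1/4$ exceeds $9\,c(\sigma_\eps)$---nowhere near $(1+\eps/2)\,c(\sigma_\eps)$. No ``standard geometric estimate'' from~\cite{KF11} rescues this; controlling the zig-zag requires an additional ingredient. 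The paper introduces smaller concentric balls $B_{n,i}^\beta$ of radius $\beta r(n)/(2+\theta)$ with $\beta<\theta\eps/16$ and proves (event $\EE^3$, Lemma~\ref{lem:ee3}, via Markov's inequality on the count of empty small balls) that all but an $\alpha$-fraction of them, $\alpha<\theta\eps/16$, contain a correctly-timed sample. Choosing $X_{j_i}$ from $B_{n,i}^\beta$ whenever possible then yields $c(\sigma_n')\le c(\sigma_\eps)(\theta+2\beta+2\alpha)/\theta<(1+\eps)c^*$ (Lemma~\ref{lem:final}). Without this small-ball argument your proof does not close.
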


Our proof of Theorem~\ref{thm:main} proceeds similarly to the proof of the asymptotic optimality of \fmt~\cite{JSCP15} (which is in turn based on~\cite{KF11}), but with additional complications due to the time dimension and the coupling with the \rrt algorithm. We proceed to describe the main ingredients of the proof. 

Fix the parameters $\eps\in (0,1), \theta\in (0,1/4), \mu \in (0,1), \eta > 0$. Due to the fact that $(\F,s,t)$ is robustly feasible, there exists a robust path $\sigma_{\eps}\in \Sigma_{s,t}^\F$ and $\delta>0$ such that $c(\sigma_{\eps})\leq (1+\eps/4)c^*$ and $\B_{\delta}(\sigma_\eps)\subset \F$. We will show that the \rrtstar graph $G$ contains a path that is in the vicinity of $\sigma_{\eps}$, which implies that the solution returned by \rrtstar is of cost at most $(1+\eps)c^*$ (which is slightly larger than $(1+\eps/4)c^*$ due to the fact that this is still an approximation of the path $\sigma_\eps$).

The first part of the proof deals with the technicality involved with the samples produced by the algorithm. Denote by $V=\{X_1\ldots, X_n\}$ the sequence of vertices produced by \rrtstar, where $X_j$ is equal to $\xnew$ generated in iteration $j$. Due to the fact that \rrtstar (and \rrt) perform steering (line~5), samples are not distributed in a uniform manner, as $\xrand$ is not necessarily identical to $\xnew$ (see Remark~\ref{remark:steering}).  However, we do show that most of the vertices in $V$ that are in the vicinity of $\sigma_\eps$ are distributed uniformly at random, with probability approaching $1$  (see Lemma~\ref{lem:ee1}). This event is denoted by $\EE^1$ (see Definition~\ref{def:ee1}).

Next, we proceed in a manner similar to other proofs of asymptotic optimality (see,~\cite{KF11,JSCP15,SK18}), by defining a sequence of points $x_1,\ldots,x_{M_n}$ along the path $\sigma_\eps$ and specifying a sequence of balls $B_{n,1},\ldots, B_{n,M_n}$ that are centered on those points respectively, and whose radius is proportional to $r(n)$. More formally, define $M_n=\Big\lceil c(\sigma_\eps)\cdot\left(\frac{r(n)}{2+\theta}\right)^{-1}\Big\rceil$,
and let $x_1,\ldots, x_{M_n}$ be a sequence of points along $\sigma_\eps$ such that $\|x_i-x_{i-1}\|\leq \frac{\theta r(n)}{2+\theta}$, $x_1=s,x_{M_n}=t$. For every $1\leq i\leq M_n$ define  $B_{n,i}:=\B_{\frac{r(n)}{2+\theta}}(x_i)$. 

As suggested in Section~\ref{sec:wrong}, we need to reason both about the existence of samples inside those balls, and the order of those samples.  We assign to every ball $B_{n,i}$  a specific time window $T_i$, corresponding to allowed timestamps of samples, and partition the sample set $V=\{X_1,\ldots, X_n\}$ into the subsets $V_0,V_1,\ldots, V_{M_n}$, where $X_j\in V_i$ if $j\in T_i$. In particular, $T_0$ consists of the first $n'$ indices, where $n'=\mu n$, and every $T_i$, where $i>1$ consists of $(n-n')/M_n$ indices, and $\mu\in (0,1)$ is a constant:
\[T_{i\neq 0} = 
    \left\{n'+(i-1)\cdot \left\lfloor\tfrac{n-n'}{M_n}\right\rfloor+1,\ldots, n'+i\cdot \left\lfloor \tfrac{n-n'}{M_n}\right\rfloor \right\}.\] 
We show that the event $\EE^2$ (Definition~\ref{def:ee2}) indicating that every $B_{n,i}$ contains a vertex from $V_i$ occurs with probability approaching $1$ as well (Lemma~\ref{lem:ee2}).
The motivation for this event is the following claim, which indicates that edges between points in consecutive balls are added if deemed beneficial. 
\begin{claim}\label{clm:connection}
    There exists $n\in \dN_+$ large enough such that the following holds with respect to $G_{j_{i+1}}=(V_{j_{i+1}}, E_{j_{i+1}})$:
  Suppose that there exist $X_{j_i}\in  V_i\cap B_{n,i}, X_{j_{i+1}}\in  V_{i+1}\cap B_{n,i+1}$ and denote by $G_{j_{i+1}}$ the \rrtstar graph at the end of iteration $j_{i+1}$. Then in $G_{j_i+1}$ it follows that $\cost(X_{j_{i+1}}) \leq \cost(X_{j_{i}})+\|X_{j_i}-X_{j_{i+1}}\|$.
\end{claim}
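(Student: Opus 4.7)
The plan is to show that at iteration $j_{i+1}$, when $X_{j_{i+1}}$ is added to the tree as $\xnew$, the already-present vertex $X_{j_i}$ lies in its near-neighbor set $\Xnear$ and the straight-line segment between them is collision-free. These two facts force the parent-selection step (lines~9--16 of Algorithm~\ref{alg:rrt_star}) to consider the candidate edge $(X_{j_i}, X_{j_{i+1}})$, so the minimizing parent $\xmin$ chosen by the algorithm produces a cost for $X_{j_{i+1}}$ that is at most $\cost(X_{j_i}) + \|X_{j_i} - X_{j_{i+1}}\|$.

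For the near-neighbor membership, the first step is a triangle-inequality calculation: since $X_{j_i} \in B_{n,i}$, $X_{j_{i+1}} \in B_{n,i+1}$, each ball has radius $\tfrac{r(n)}{2+\theta}$, and $\|x_i - x_{i+1}\| \leq \tfrac{\theta r(n)}{2+\theta}$, one obtains $\|X_{j_i} - X_{j_{i+1}}\| \leq r(n)$. Because $r(n) = \gamma (\log n/n)^{1/(d+1)}$ is decreasing in $n$ for $n > e$, and $|V| \leq j_{i+1} \leq n$ at line~7 of iteration $j_{i+1}$, we get $r(|V|) \geq r(n)$; moreover, since $r(n) \to 0$, for all sufficiently large $n$ we also have $r(n) \leq \eta$. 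Hence $\|X_{j_i} - X_{j_{i+1}}\| \leq \min\{r(|V|), \eta\}$, so $X_{j_i} \in \near(\xnew, V, \min\{r(|V|), \eta\}) = \Xnear$ at iteration $j_{i+1}$.

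The collision-free check is the main geometric obstacle. I would show that every point $p = tX_{j_i}+(1-t)X_{j_{i+1}}$ on the segment lies in $\B_\delta(\sigma_\eps) \subset \F$, where $\delta$ is the robustness witness of $\sigma_\eps$. Pairing $p$ with $q = tx_i + (1-t)x_{i+1}$ on the segment $[x_i, x_{i+1}]$ yields $\|p-q\| \leq \tfrac{r(n)}{2+\theta}$ by convexity, and $\dist(q,\sigma_\eps) \leq \|x_i-x_{i+1}\| \leq \tfrac{\theta r(n)}{2+\theta}$ since $x_i \in \sigma_\eps$, so $\dist(p,\sigma_\eps) \leq \tfrac{(1+\theta)r(n)}{2+\theta} < r(n)$. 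Since $r(n) \to 0$, for all sufficiently large $n$ we have $r(n) < \delta$, and the entire segment lies in $\B_\delta(\sigma_\eps) \subset \F$.

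With $X_{j_i} \in \Xnear$ and the candidate edge collision-free, the minimum-cost parent computed in lines~9--16 satisfies $\cmin \leq \cost(X_{j_i}) + \|X_{j_i}-X_{j_{i+1}}\|$, with $\cost(X_{j_i})$ evaluated just before line~9. The remaining subtlety, and the only real bookkeeping issue, is that the rewiring phase (lines~17--21) of the same iteration can reduce $\cost(X_{j_i})$ before $G_{j_{i+1}}$ is finalized; but within iteration $j_{i+1}$ such a rewiring can only happen through $\xnew = X_{j_{i+1}}$ itself, in which case $X_{j_i}$'s updated cost equals $\cost(X_{j_{i+1}}) + \|X_{j_i}-X_{j_{i+1}}\|$ and the claimed inequality becomes immediate. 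Otherwise $\cost(X_{j_i})$ is unchanged during iteration $j_{i+1}$ and the inequality in $G_{j_{i+1}}$ follows directly from the parent-selection bound.
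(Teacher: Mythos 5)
Your proof is correct and follows essentially the same route as the paper's: the triangle inequality over the two ball radii and the center spacing gives $\|X_{j_i}-X_{j_{i+1}}\|\leq r(n)$, hence $X_{j_i}\in\Xnear$ at iteration $j_{i+1}$, and robustness of $\sigma_\eps$ together with $r(n)\to 0$ makes the candidate edge collision-free, so the parent-selection step yields the claimed cost bound. The extra care you take --- verifying $\min\{r(|V|),\eta\}\geq r(n)$, explicitly containing the segment in $\B_\delta(\sigma_\eps)$, and handling the possible within-iteration rewiring of $X_{j_i}$ --- only spells out details the paper leaves implicit.
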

\begin{proof}
Recall that $B_{n,i}=\B_{\frac{r(n)}{2+\theta}}(x_i)$ and $\|x_i-x_{i+1}\|\leq \frac{\theta r(n)}{2+\theta}$. For any $x\in B_{n,i}, x'\in B_{n,i+1}$ it follows that 
\[\|x-x'\|\leq  \tfrac{r(n)}{2+\theta} + \tfrac{\theta r(n)}{2+\theta} + \tfrac{r(n)}{2+\theta}=r(n).\] 
This implies that $X_{j_{i}}\in \Xnear=\near(X_{j_{i+1}}, V_{j_{i}},r(n))$, which will cause the execution of the test $\collision(X_{j_i}, X_{j_{i+1}})$ (line~12 of \rrtstar). The latter will be evaluated to be true since $\B_\delta(\sigma_\eps)\subseteq \F$ and $r(n)\ll\delta$ (for $n$ large enough). Thus, in line~13 the edge $(X_{j_i},X_{j_{i+1}})$ will be added to the graph, unless there is a lower-cost alternative for connection. 
\end{proof}

Thus, $\EE^2$ guarantees that the \rrtstar tree $G$ contains a path  $\sigma'_n$ connecting $s$ to $t$ that follows $\sigma_\eps$ closely. In order to ensure that $c(\sigma'_n)\leq (1+\eps)c^*$ we need one more step, since $\sigma'_n$ could stay close to $\sigma_\eps$ but zig-zag around it, resulting in a high-cost solution. 

Define the constants $\alpha\in (0,\theta\eps /16),\beta\in (0,\theta\eps /16)$. Additionally, define for every $1\leq i\leq M_n$ the ball $B_{n,i}^\beta:=\B_{\frac{\beta r(n)}{2+\theta}}(x_i)$. The event $\EE^3$ (Definition~\ref{def:ee3}) indicates that a fraction of at most $\alpha$ of the smaller balls $B^\beta_{n,i}$ does not contain samples from $V_i$. We show that $\EE^3$ occurs with probability approaching $1$ (Lemma~\ref{lem:ee3}). We then proceed to show that if $\EE^2,\EE^3$ occur simultaneously then \rrtstar is guaranteed to return a solution with cost at most $(1+\eps)c^*$ (Lemma~\ref{lem:final}).

\subsection{Proof of Theorem~\ref{thm:main}}
We start with a formal definition of $\EE^1$:
\begin{definition}\label{def:ee1}
  For every $1\leq j \leq n$ denote by $\xrand^j,\xnew^j$ the random
  and new samples of \rrtstar in iteration $j$ (line~3 and line~5 in
  Algorithm~\ref{alg:rrt_star}, respectively). Define
  $n':=\mu n$ and 
  \begin{align*}
  \EE^1_n:=\{& \forall 1\leq i\leq M_n ,n'\leq j\leq n:\\ & \quad \textup{ if }\xrand^j\in B_{n,i} \textup{ then } \xrand^j = \xnew^j\}.\end{align*}

\noindent That is, $\EE^1_n$ is the event that all $\xrand^j\in B_{n,i}$ for $j$ between $n'$ and $n$ satisfy $\xrand^j=\xnew^j$. 
\end{definition} 

\begin{remark} 
We wish to stress that the following lemma, which lower bounds the probability of $\EE^1_n$, is a key ingredient in our proof. As we shall see below, this would allow us to treat some of the vertices added by \rrtstar as uniformly sampled, which is not true for all samples, as some are perturbed by the $\steer$ operation. We mention that this issue was not addressed in the original proof in~\cite{KF11}, where the \rrtstar nodes were assumed (incorrectly) to be uniformly distributed. Furthermore, setting the steering step $\eta=\infty$ does not resolve this issue.
\end{remark}

\begin{lemma}\label{lem:ee1}
  There exist two constants $a,b>0$ such that $\Pr[\EE^1_n]\geq 1-a\cdot e^{-bn}$.
\end{lemma}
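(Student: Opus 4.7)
The plan is to reduce $\EE^1_n$ to a coverage property of $V_{n'}$, and then establish that property via a potential-function argument combined with a Chernoff bound. The key observation is that \steer\ returns $\xnew^j = \xrand^j$ precisely when $\|\xnear^j - \xrand^j\| \leq \eta$. Hence $\EE^1_n$ is implied by the event that $V_{n'}$ is an $\eta$-net over $\bigcup_{i=1}^{M_n} B_{n,i}$; monotonicity of $V_j$ in $j$ then preserves the property for all $j \geq n'$. Since for $n$ large $\bigcup_i B_{n,i} \subseteq \B_{\delta}(\sigma_\eps) \subseteq \F$, it suffices to show that $V_{n'}$ is an $\eta$-net over $\B_{\delta}(\sigma_\eps)$ with probability at least $1 - ae^{-bn}$.

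I would fix a finite collection of centers $y_1,\ldots,y_N$ along $\sigma_\eps$ and balls $\tilde D_k := \B_{\eta/4}(y_k)$, with $N$ bounded by a constant depending only on $\eta, \delta, c(\sigma_\eps), d$, such that every $x \in \B_\delta(\sigma_\eps)$ lies within $\eta/2$ of some $y_k$. If each $\tilde D_k$ contains at least one vertex of $V_{n'}$, then every $x \in \B_\delta(\sigma_\eps)$ is within $3\eta/4 < \eta$ of a vertex, yielding the desired coverage.

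To show that a single $\tilde D_k$ is hit by $V_{n'}$ with high probability, introduce the potential $\Phi_j := \min_{v \in V_j}\|v - y_k\|$. In any iteration $j$ with $\xrand^j \in \tilde D_k$ and $\Phi_{j-1} > \eta/2$, I claim $\Phi_j \leq \Phi_{j-1} - \eta/2$. If $\|\xnear^j - \xrand^j\| \leq \eta$, then $\xnew^j = \xrand^j \in \tilde D_k$ and $\Phi_j \leq \eta/4$. Otherwise $\|\xnew^j - \xrand^j\| = \|\xnear^j - \xrand^j\| - \eta$; combining this with the minimality of $\xnear^j$ applied to the vertex realizing $\Phi_{j-1}$, which yields $\|\xnear^j - \xrand^j\| \leq \Phi_{j-1} + \|\xrand^j - y_k\|$, gives
\[\Phi_j \leq \|\xnew^j - y_k\| \leq \Phi_{j-1} + 2\|\xrand^j - y_k\| - \eta \leq \Phi_{j-1} - \eta/2.\]
Since $\Phi_0 \leq \sqrt{d}$, at most $K := \lceil 2\sqrt{d}/\eta\rceil$ such iterations suffice to drive $\Phi$ below $\eta/2$, placing a vertex in $\B_{\eta/2}(y_k)$. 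The events $\{\xrand^j \in \tilde D_k\}$ are i.i.d.\ with probability $p_k := |\tilde D_k|/|\F| > 0$, so a Chernoff bound shows that fewer than $K$ of them occur among the $n' = \mu n$ iterations with probability at most $e^{-c_k n}$ for some $c_k > 0$ and all $n$ large. A union bound over the $N$ balls yields $\Pr[\EE^1_n] \geq 1 - a e^{-bn}$ with $a := N$ and $b := \min_k c_k$.

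The most delicate step is ensuring that the candidate vertex $\xnew^j$ in the potential argument is actually appended to $V$, which requires the segment $(\xnear^j, \xnew^j)$ to pass $\collision$. I would handle this by ordering $y_1 := s, y_2, \ldots, y_N$ consecutively along $\sigma_\eps$ with small spacing, choosing the radii at most $\min(\eta,\delta)/4$, and propagating coverage inductively: once $\tilde D_{k-1}$ is covered, the nearest vertex to any $\xrand^j \in \tilde D_k$ lies in $\B_\delta(\sigma_\eps)$, so the short segment to $\xnew^j$ remains in the collision-free tube $\B_\delta(\sigma_\eps) \subseteq \F$. This inductive propagation is the main technical hurdle but follows standard templates used in coverage analyses of \rrt.
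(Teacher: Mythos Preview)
Your high-level reduction is exactly what the paper does: reduce $\EE^1_n$ to the statement that by time $n'=\mu n$ the \rrt/\rrtstar vertex set already covers a fixed finite family of small balls of radius proportional to $\min\{\eta,\delta\}$ centered along $\sigma_\eps$, so that for any later $\xrand^j$ landing near $\sigma_\eps$ one has $\|\xnear^j-\xrand^j\|\le\eta$ and the connecting segment stays in the $\delta$-tube. The paper obtains this coverage by invoking an exponential \rrt-reachability bound from the literature (Kleinbort et al.), whereas you attempt a self-contained proof.

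The gap in your self-contained argument is the potential step. Your inequality $\Phi_j\le\Phi_{j-1}-\eta/2$ presumes that $\xnew^j$ is actually appended, i.e., that $\collision(\xnear^j,\xnew^j)$ succeeds. When $\Phi_{j-1}$ is still large, $\xnear^j$ may lie far from the $\delta$-tube and the length-$\eta$ segment toward $\xrand^j\in\tilde D_k$ can hit an obstacle; such iterations do not decrease $\Phi$, so ``$S_k\ge K$'' is not the right event to control. You recognize this and invoke the inductive propagation, but note that once $\tilde D_{k-1}$ is covered your own distance computation already gives $\|\xnear^j-\xrand^j\|<\eta$ and the segment in $\B_\delta(y_k)\subset\F$ for the \emph{very first} subsequent sample in $\tilde D_k$; the potential argument is then superfluous. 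What is actually needed is, for every $k$ in order, one hit of $\tilde D_k$ after time $\tau_{k-1}$, and the per-ball Chernoff bound on $S_k$ does not supply this (all $K$ hits could precede $\tau_{k-1}$). The clean fix is to bound $\tau_N=\sum_{k\ge 2}(\tau_k-\tau_{k-1})$ as a sum of $N-1$ independent geometric variables with success probability bounded below by a positive constant, giving $\Pr[\tau_N>n']\le e^{-bn}$ directly; this is precisely the ``standard template'' you allude to and what the cited result provides for the paper.
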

\begin{proof}
  A similar proof appears in~\cite[Claim~6]{SK18}, albeit for a different type of sampling scheme and in the context of an \rrg analysis. The main challenge here is to show that while it is not true that all the new  samples $\xnew$ are distributed uniformly randomly (due to lines 4,5 in Algorithm~\ref{alg:rrt_star}), most of them are.  Define $\kappa:=\min\{\eta,\delta\}/10$ and set $z_1,\ldots, z_\ell$ to be a sequence of points placed along $\sigma_\eps$, such that $\ell=c(\sigma_\eps)/\kappa$, and $\|z_k-z_{k+1}\|\leq \kappa$. Observe that for $n$ large enough it holds that $\bigcup_{i=1}^{M_n}B_{n,i}\subset \bigcup_{k=1}^\ell\B_{\kappa}(z_k)$.

Denote by $V_{n'}^{\text{RRT}}$ the vertex set of \rrt after $n'$ iterations. 
Theorem~1 in~\cite{KSKBH18} states that there exist constants $a,c>0$ such that the probability that for every $1\leq k\leq \ell$ it holds that $V_{n'}^{\text{RRT}}\cap \B_\kappa(z_k)\neq \emptyset$ is at least 
$a\cdot e^{-cn'}=a\cdot e^{-bn}$, where $b:=c\mu$.
Notice that this theorem requires $\eta$ to be fixed (i.e., independent of $n$) and strictly positive.  

Denote the latter event to be ${\EE'}^1_n$. Next, we show that ${\EE'}^1_n$ implies $\EE^1_n$. First,  observe that $V_{n'}^{\text{RRT}}=V_{n'}^{\text{RRT}^*}$, where the latter is the vertex set of \rrtstar after $n'$ iterations, and assume that ${\EE'}^1_n$ holds . Fix an iteration $n'<j<n$ and some $1\leq k\leq \ell$. Due to the fact that $\eta>0$ is fixed, by the proof of Lemma~1 in~\cite{KSKBH18} it follows that if $\xrand^j\in \B_\kappa(z_j)$ then $\xnear^j\in \B_{5\kappa}(z_j)$, and consequently 
\begin{align*}\|\xrand^j-\xnear^j\| & = \|\xrand^j-z_j+z_j-\xnear^j\|\\ &\leq   \|\xrand^j-z_j\| +\| z_j-\xnear^j\| \leq \kappa + 5\kappa \leq \eta. \end{align*}
This implies that $\xnew^j=\xrand^j$. Additionally, observe that due to the fact that the straight-line path from $\xnear^j$ to $\xrand^j$ is contained in $\B_{\kappa}(z_j)$, where $\kappa < \delta /5$, it is also collision free. Thus, at the end of iteration $j$, $\xrand$ will be added to the \rrtstar graph as a vertex.
\end{proof}

We will prove that the following event $\EE^2$ holds with probability approaching $1$ by conditioning on $\EE^1$.
\begin{definition}\label{def:ee2}
  $\EE_n^2$ represents the event that every $B_{n,i}$ contains at least one vertex from $V_i$. That is,
  \[\EE_n^2:=\{\forall 1\leq i\leq M_n, V_i\cap B_{n,i}\neq \emptyset\}.\]
\end{definition}

\begin{lemma}\label{lem:ee2}
$\limn \Pr[\EE^2_n]=1$.
\end{lemma}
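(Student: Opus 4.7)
The plan is to reduce the statement to a balls-into-bins computation that we handle with a union bound, after using Lemma~\ref{lem:ee1} to pass from the non-uniform \rrtstar samples $\xnew^j$ to the uniform underlying samples $\xrand^j$. Concretely, I will bound
\[
\Pr[\lnot \EE^2_n] \;\leq\; \Pr[\lnot \EE^1_n] + \Pr[\lnot \EE^2_n,\, \EE^1_n],
\]
and show each term tends to $0$. The first term is handled by Lemma~\ref{lem:ee1}. For the second, note that for any $j\in T_i \subseteq \{n'+1,\ldots,n\}$, on the event $\EE^1_n$, whenever $\xrand^j\in B_{n,i}$ we also have $X_j=\xnew^j=\xrand^j\in B_{n,i}$, i.e., $X_j\in V_i\cap B_{n,i}$. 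Hence, letting $N_i:=\#\{j\in T_i : \xrand^j\in B_{n,i}\}$,
\[
\Pr[V_i\cap B_{n,i}=\emptyset,\,\EE^1_n]\;\leq\; \Pr[N_i=0].
\]
Since $\xrand^j$ is uniform on $\F$ and, for $n$ large enough, $\frac{r(n)}{2+\theta}<\delta$ so that $B_{n,i}\subseteq \B_\delta(\sigma_\eps)\subseteq \F$, each trial succeeds independently with probability $p_n=\zeta_d\bigl(\tfrac{r(n)}{2+\theta}\bigr)^d/|\F|$, and $N_i\sim \mathrm{Binomial}(|T_i|,p_n)$.

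Next I would carry out the explicit calculation. Using $|T_i|\geq \lfloor (n-n')/M_n\rfloor$, $n'=\mu n$, and $M_n\leq 1+c(\sigma_\eps)(2+\theta)/r(n)$, together with $c(\sigma_\eps)\leq (1+\eps/4)c^*$, I obtain
\[
|T_i|\, p_n \;\geq\; \frac{(1-\mu)\,n\,\zeta_d\,r(n)^{d+1}}{(1+\eps/4)c^*(2+\theta)^{d+1}|\F|}\;\bigl(1-o(1)\bigr).
\]
Substituting $r(n)^{d+1}=\gamma^{d+1}\log n/n$ and the lower bound on $\gamma$ from \eqref{eq:gamma}, namely $\gamma^{d+1}\geq \frac{(2+\theta)^{d+1}(1+\eps/4)c^*|\F|}{(d+1)\theta(1-\mu)\zeta_d}$, I get $|T_i|\,p_n\geq \frac{\log n}{(d+1)\theta}\bigl(1-o(1)\bigr)$. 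Then, using $(1-p_n)^{|T_i|}\leq e^{-|T_i|p_n}$ and the union bound,
\[
\Pr[\lnot \EE^2_n,\,\EE^1_n] \;\leq\; \sum_{i=1}^{M_n} e^{-|T_i|p_n} \;\leq\; M_n\cdot n^{-\frac{1-o(1)}{(d+1)\theta}}.
\]
Since $M_n = O\bigl(n^{1/(d+1)}(\log n)^{-1/(d+1)}\bigr)$ and $\theta<1/4$, the exponent $\frac{1}{(d+1)\theta}$ strictly exceeds $\frac{4}{d+1}>\frac{1}{d+1}$, so the product vanishes as $n\to\infty$. Combined with $\Pr[\lnot\EE^1_n]\leq a e^{-bn}\to 0$, this yields $\Pr[\EE^2_n]\to 1$.

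I expect the main obstacle to be the bookkeeping around conditioning: the event $\EE^1_n$ is not independent of the $\xrand^j$'s (it is measurable with respect to them), so I need to observe that the inequality $\Pr[V_i\cap B_{n,i}=\emptyset,\,\EE^1_n]\leq \Pr[N_i=0]$ holds \emph{unconditionally}, which sidesteps any conditional-probability complication. The remaining subtlety is ensuring $B_{n,i}\subseteq\F$ for all $i$ uniformly in $n$ large; this follows from $B_{n,i}\subseteq \B_{r(n)/(2+\theta)}(\sigma_\eps)$ and $r(n)\to 0$, so $B_{n,i}\subseteq \B_\delta(\sigma_\eps)\subseteq \F$ for $n$ large. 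Everything else is a direct calculation driven by the choice of $\gamma$ in \eqref{eq:gamma}, which was reverse-engineered precisely so that $|T_i|\,p_n$ beats $\log M_n$ by the margin provided by $\theta<1/4$.
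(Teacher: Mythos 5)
Your proposal is correct and follows essentially the same route as the paper: a union bound over the $M_n$ balls, an exponential bound on the probability that ball $B_{n,i}$ receives no sample during its time window $T_i$, and the observation that the choice of $\gamma$ in \eqref{eq:gamma} makes $|T_i|\,p_n$ large enough relative to $\log M_n$. Your one deviation is an improvement in rigor rather than in strategy: by bounding the joint probability $\Pr[V_i\cap B_{n,i}=\emptyset,\,\EE^1_n]\leq\Pr[N_i=0]$ with $N_i$ counting the genuinely uniform $\xrand^j$'s, you avoid the paper's slightly informal step of treating the samples as uniform \emph{conditionally} on $\EE^1_n$ (an event that is itself measurable with respect to the sample sequence).
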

\begin{proof}
  Observe that 
  \begin{align*}\Pr[\EE^2_n]& =\Pr[\EE^2_n|\EE^1_n]\cdot \Pr[\EE^1_n]+\Pr[\EE^2_n|\overline{\EE^1_n}]\cdot\Pr[\overline{\EE^1_n}] \\ & \geq \Pr[\EE^2_n|\EE^1_n]\cdot \Pr[\EE^1_n].\end{align*}
  We shall lower-bound the expression $\Pr[\EE^2_n|\EE^1_n]$. By definition of $\EE^1_n$, for every $n'<j\leq n$, and $i$ such that $j\in T_i$, if $\xrand^j\in B_{n,i}$, then $\xnew^j=\xrand^j$ is a valid vertex of the \rrtstar graph. Thus, by conditioning on $\EE^1_n$ we can treat $V\setminus V_0$ as uniform random samples from $\F$. This will come in handy in bounding the probability of $\EE^2$:
    {\small\begin{align}
        \Pr&[\overline{\EE^2_n}|\EE^1_n] =\Pr\left[\exists 1\leq i\leq M_n, V_i\cap B_{n,i}=\emptyset\right] \nonumber \\ &\leq \sum_{i=1}^{M_n}\Pr[V_i\cap B_{n,i}=\emptyset] 
        = \sum_{i=1}^{M_n}\left(1-\frac{|B_{n,i}|}{|\F|}\right)^{|T_i|}  \label{eq:uniform} \\ 
&\leq  M_n\left(1-\tfrac{\zeta_d\left(\frac{r(n)}{2+\theta}\right)^d}{|\F|}\right)^{(n-n')/M_n} \nonumber \\ 
& \leq M_n\exp\left\{-\frac{n-n'}{M_n}\cdot \frac{\zeta_d}{|\F|}\cdot \frac{r(n)^d}{(2+\theta)^d} \right\} \label{eq:1-x} \\ 
& \leq  M_n\exp\left\{-\frac{n r(n)\theta (1-\mu)}{c(\sigma_\eps)(2+\theta)}\cdot \frac{\zeta_d}{|\F|}\cdot \frac{r(n)^d}{(2+\theta)^d}\right\}  \nonumber \\ & = M_n\exp\left\{-\frac{\theta\zeta_d(1-\mu)}{c(\sigma_\eps)(2+\theta)^{d+1}|\F|}\cdot n \cdot r(n)^{d+1}\right\} \nonumber \\ 
& =: M_n\exp\left\{-\xi\cdot n \cdot \gamma^{d+1}\frac{\log n}{n}\right\} \label{eq:xi} \\ & = \left\lceil c(\sigma_\eps)\cdot\left(\tfrac{r(n)}{2+\theta}\right)^{-1}\right\rceil\exp\left\{-\xi \gamma^{d+1}\log n\right\} \nonumber  \\ & < \left( c(\sigma_\eps)\cdot\left(\tfrac{r(n)}{2+\theta}\right)^{-1}+1\right)\exp\left\{-\xi \gamma^{d+1}\log n\right\} \nonumber  \\ 
&=\tfrac{c(\sigma_\eps)(2+\theta)}{\theta\gamma}(\log n)^{-1/(d+1)} n^{1/(d+1)-\xi \gamma^{d+1}} \nonumber \\ &\quad \quad+\exp\left\{-\xi \gamma^{d+1}\log n\right\} , \label{eq:E2-val}\end{align}}where (\ref{eq:uniform}) is due to the union bound 
and the fact that $V_i$ is uniformly sampled at random from $\F$, (\ref{eq:1-x}) is due to the inequality $1-x\leq e^{-x}$ for $x\in (0,1)$ which applies here for $n$ large enough, and (\ref{eq:xi}) defines $\xi:=\frac{\theta\zeta_d(1-\mu)}{c(\sigma_\eps)(2+\theta)^{d+1}|\F|}$. If $(d+1)^{-1}-\xi \gamma^{d+1} \leq  0$ then the final expression tends to 0. Indeed, 
    {\small \begin{align*}
     \frac{1}{d+1}-\frac{\theta\zeta_d(1-\mu)}{c(\sigma_\eps)(2+\theta)^{d+1}|\F|}\cdot \gamma^{d+1}\leq 0 \iff \\
         (2+\theta)\left(\tfrac{c(\sigma_\eps)|\F|}{(d+1)\theta\zeta_d(1-\mu)}\right)^{\frac{1}{d+1}}\leq  (2+\theta)\left(\tfrac{c^*(1+\eps/4)|\F|}{(d+1)\theta\zeta_d(1-\mu)}\right)^{\frac{1}{d+1}} \leq  \gamma.
    \end{align*}}

It remains to show that $\limn \Pr[\EE^2_n|\EE^1_n]\cdot \Pr[\EE^1_n] = 1$:
\begin{align*}
  \Pr[\EE^2_n|\EE^1_n]&\cdot \Pr[\EE^1_n]  = (1-\Pr[\overline{\EE^2_n}|\EE^1_n])(1-\Pr[\overline{\EE^1_n}]) \\
& = 1 + \Pr[\overline{\EE^2_n}|\EE^1_n]\cdot \Pr[\overline{\EE^1_n}] -\Pr[\overline{\EE^2_n}|\EE^1_n] - \Pr[\overline{\EE^1_n}] \\ & > 1 -\Pr[\overline{\EE^2_n}|\EE^1_n] - \Pr[\overline{\EE^1_n}], 
\end{align*}
where the final expression converges to $1$, according to Equation~\ref{eq:E2-val} and Lemma~\ref{lem:ee1}.
\end{proof}

Next we consider the existence of samples in a collection of smaller balls. 

\begin{definition}\label{def:ee3}
  Let {\small $K_n^\beta:=|\{i\in \{1,\ldots,M_n\}: B_{n,i}^\beta \cap
  V_i=\emptyset\}|$}. $\EE_n^3:=\{K_n^\beta \leq  \alpha M_n\}$ is the event that at most $\alpha M_n$ of the smaller balls $B_{n,i}^\beta$ do not contain any samples from $V_i$.
\end{definition}

\begin{lemma}\label{lem:ee3}
    $\limn \Pr[\EE^3_n]=1$.
\end{lemma}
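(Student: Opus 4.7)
The plan is to reduce the statement to a first-moment bound via Markov's inequality, then reuse the same uniform-sampling computation that already gave Lemma~\ref{lem:ee2}. Writing $K_n^\beta=\sum_{i=1}^{M_n}\mathbf{1}\{V_i\cap B_{n,i}^\beta=\emptyset\}$, Markov's inequality gives
\[\Pr[\overline{\EE^3_n}]=\Pr[K_n^\beta>\alpha M_n]\leq \frac{\E[K_n^\beta]}{\alpha M_n},\]
so it suffices to show that the probability a single ball $B_{n,i}^\beta$ is unhit by $V_i$ vanishes as $n\to\infty$ uniformly in $i$, i.e.\ that $\E[K_n^\beta]=o(M_n)$.

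To bound $\Pr[V_i\cap B_{n,i}^\beta=\emptyset]$ I would split according to $\EE^1_n$: the bad contribution $\Pr[\overline{\EE^1_n}]$ is exponentially small by Lemma~\ref{lem:ee1} and therefore contributes only $O(M_n e^{-bn})$ in total. On $\EE^1_n$, any uniform draw $\xrand^j\in B_{n,i}\supseteq B_{n,i}^\beta$ with $j\in T_i$ is accepted verbatim into $V_i$ without being displaced by $\steer$; hence the event $\{V_i\cap B_{n,i}^\beta=\emptyset\}\cap \EE^1_n$ is contained in $\{\xrand^j\notin B_{n,i}^\beta\text{ for all }j\in T_i\}$, whose probability equals $(1-|B_{n,i}^\beta|/|\F|)^{|T_i|}$ by independence of the $\xrand^j$'s. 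Since $|B_{n,i}^\beta|=\beta^d|B_{n,i}|$, the chain of inequalities leading to \eqref{eq:xi} in the proof of Lemma~\ref{lem:ee2} goes through verbatim with an extra factor of $\beta^d$ in the exponent, yielding
\[\Pr[V_i\cap B_{n,i}^\beta=\emptyset,\EE^1_n]\leq \exp\bigl(-\beta^d\xi\gamma^{d+1}\log n\bigr)=n^{-\beta^d\xi\gamma^{d+1}},\]
for the same constant $\xi$ defined there. Substituting into the Markov bound and summing over the $M_n$ balls then gives
\[\Pr[\overline{\EE^3_n}]\leq \frac{1}{\alpha}\,n^{-\beta^d\xi\gamma^{d+1}}+\frac{\Pr[\overline{\EE^1_n}]}{\alpha}\longrightarrow 0,\]
valid for any fixed $\alpha,\beta>0$.

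The main obstacle, and essentially the only one requiring care, is the bookkeeping step above: one must verify that conditioning on $\EE^1_n$ genuinely restores the i.i.d.\ uniform picture, so that the bound $(1-|B_{n,i}^\beta|/|\F|)^{|T_i|}$ is legitimate despite the fact that the elements of $V_i$ are outputs of $\steer$ and hence not intrinsically uniform. Everything else is routine---the slack in Markov's inequality is more than sufficient because $\beta^d\xi\gamma^{d+1}>0$ already produces polynomial decay in $n$ while the factor $M_n=O\bigl((n/\log n)^{1/(d+1)}\bigr)$ cancels against the $M_n$ in the Markov denominator, so no Chebyshev-type refinement or independence argument across different $i$ is required.
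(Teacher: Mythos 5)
Your proposal is correct and follows essentially the same route as the paper: conditioning on $\EE^1_n$ to restore the uniform-sampling picture, bounding $\Pr[V_i\cap B_{n,i}^\beta=\emptyset]$ by $(1-|B_{n,i}^\beta|/|\F|)^{|T_i|}$ exactly as in Lemma~\ref{lem:ee2} with the extra $\beta^d$ factor, and finishing with Markov's inequality on $K_n^\beta$. The only cosmetic difference is that the paper simplifies the exponent $\beta^d\xi\gamma^{d+1}$ to $\beta^d/(d+1)$ using the lower bound on $\gamma$, which changes nothing substantive.
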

\begin{proof}
  Similarly to Lemma~\ref{lem:ee2}, it is sufficient to show that $\limn \Pr[\overline{\EE^3_n}|\EE^1_n]=0$. We shall upper bound the probability that $K_n^\beta > \alpha M_n$ assuming that $\EE^1_n$ holds. To this end, we compute the expectation of $K_n^\beta$ and apply Markov's inequality.

  For every $1\leq i\leq M_n$, denote by $I_i$ the indicator variable for the event that $B_{n,i}^\beta \cap V_i=\emptyset$. Observe that $K_n^\beta=\sum_{i=1}^{M_n}I_i$. For $n$ large enough we have that 
  {\small \begin{align*}
  E[I_i]& = \Pr[I_i=1] =\left(1-\tfrac{|B_{n,i}^\beta|}{|\F|}\right)^{|T_i|} \\ 
& \leq \left(1-\tfrac{\beta^d\zeta_d\left(\frac{r(n)}{2+\theta}\right)^d}{|\F|}\right)^{n(1-\mu)/M_n} \\
& \leq \exp\left\{-\tfrac{\beta^d\theta\zeta_d(1-\mu)}{c(\sigma_\eps)(2+\theta)^{d+1}|\F|}\cdot n \cdot r(n)^{d+1}\right\} \\
& \leq \exp\left\{-\tfrac{\beta^d\theta\zeta_d(1-\mu)}{c(\sigma_\eps)(2+\theta)^{d+1}|\F|}\cdot \gamma^{d+1} \cdot \log n \right\} \\ 
& = \exp\left\{-\tfrac{\beta^d}{d+1} \log n \right\}  = n^{-\beta^d / (d+1)}. 
  \end{align*}}Thus, $E[K_n^\beta]=\sum_{i=1}^{M_n}E[I_i]\leq M_n n^{-\beta^d / (d+1)}$.
  By Markov's inequality, it follows that 
   {\small \begin{align}\Pr[K_n^\beta >  \alpha M_n] \leq \tfrac{E[K_n^\beta]}{\alpha M_n}\leq \tfrac{M_nn^{-\beta^d / (d+1)}}{\alpha M_n}=\tfrac{n^{-\beta^d / (d+1)}}{\alpha}.\label{eq:markov}\end{align}}As $\alpha$ is fixed, the last expression tends to $0$ as $n$ tends to $\infty$.
While the upper bound obtained in~\eqref{eq:markov} is sufficient for our purpose, we mention that a tighter bound can be derived by using a slightly more complex Poissonization argument similar to that used in~\cite{JSCP15}.
\end{proof}

Next, we show that if $\EE^2,\EE^3$ occur simultaneously, then the cost of $c(\sigma_n)$ is bounded by $(1+\eps)c^*$.

\begin{lemma}\label{lem:final}
  For $n$ large enough, if the events $\EE^2_n,\EE^3_n$ occur, then $c(\sigma_n)\leq (1+\eps)c^*$.
\end{lemma}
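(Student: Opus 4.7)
The plan is to translate the events $\EE^2_n$ and $\EE^3_n$ into the existence of a concrete low-cost path in the final \rrtstar tree $G_n$. Conditional on $\EE^2_n$, for every $1\le i\le M_n$ I pick a vertex $X_{j_i}\in V_i\cap B_{n,i}$. Since the time windows $T_1,\ldots,T_{M_n}$ are pairwise disjoint and ordered in $i$, the indices automatically satisfy $j_1<j_2<\cdots<j_{M_n}$; this is precisely the ``chain'' condition (iii) from Section~\ref{sec:issue} that was missing in the original proof. Whenever $V_i\cap B_{n,i}^\beta\neq\emptyset$, I strengthen the choice by picking $X_{j_i}\in B_{n,i}^\beta$; by $\EE^3_n$ at most $\alpha M_n$ indices (the ``bad'' ones) preclude this, and all other indices (the ``good'' ones) satisfy $\|X_{j_i}-x_i\|\le \beta r(n)/(2+\theta)$.

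Next, iterating Claim~\ref{clm:connection} at each $j_{i+1}$ and using that \rrtstar costs are monotonically nonincreasing under rewiring, the final cost of $X_{j_{M_n}}$ in $G_n$ satisfies
\[
\cost(X_{j_{M_n}})\;\le\;\|s-X_{j_1}\|+\sum_{i=1}^{M_n-1}\|X_{j_i}-X_{j_{i+1}}\|.
\]
The boundary term is obtained by the same geometric argument as in Claim~\ref{clm:connection}, using that $s\in B_{n,1}$ lies in the tree from iteration $0$. Since $B_{n,M_n}\subset\B_\delta(\sigma_\eps)\subseteq\F$ for $n$ large, the straight segment from $X_{j_{M_n}}$ to $t$ is collision-free, so $c(\sigma_n)\le \cost(X_{j_{M_n}})+\|X_{j_{M_n}}-t\|$, and both boundary terms are $O(r(n))=o(1)$.

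To bound the main sum, I split each edge via the triangle inequality through the anchors $x_i,x_{i+1}$:
\[
\sum_{i=1}^{M_n-1}\|X_{j_i}-X_{j_{i+1}}\|\;\le\;\sum_{i=1}^{M_n-1}\|x_i-x_{i+1}\|+2\sum_{i=1}^{M_n}\|X_{j_i}-x_i\|.
\]
The first sum is at most $c(\sigma_\eps)\le(1+\eps/4)c^*$ by construction. In the second, good indices contribute at most $\beta r(n)/(2+\theta)$ each and bad ones at most $r(n)/(2+\theta)$ each, so using $M_n r(n)\le(1+\eps/4)c^*(2+\theta)+r(n)$ the second sum is at most $(1+\eps/4)c^*(\alpha+\beta)+o(1)$. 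Assembling,
\[
c(\sigma_n)\;\le\;(1+\eps/4)c^*\bigl(1+2(\alpha+\beta)\bigr)+o(1),
\]
and since $\alpha,\beta<\theta\eps/16$ with $\theta<1/4$ gives $2(\alpha+\beta)<\theta\eps/4$, a short expansion shows $(1+\eps/4)(1+2(\alpha+\beta))<1+\eps$ strictly for $\eps\in(0,1)$, so the $o(1)$ term is absorbed for $n$ large enough.

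The main obstacle is the bookkeeping in the telescoping step: the cost of each $X_{j_i}$ evolves between the iteration $j_i$ when it is added and the final iteration $n$, so Claim~\ref{clm:connection} only supplies a \emph{local} bound at time $j_{i+1}$. The saving grace is that subsequent rewirings can only \emph{decrease} costs, so the locally obtained bound remains valid globally and telescopes cleanly. A secondary technicality is that $t$ need not be a vertex of $G_n$, handled by appending the short collision-free segment $(X_{j_{M_n}},t)$ of length $O(r(n))$.
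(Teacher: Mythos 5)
Your proposal is correct and follows essentially the same route as the paper: select one vertex per ball (preferring the small ball $B_{n,i}^\beta$ when possible), use the disjoint ordered time windows to get the monotone index chain, telescope Claim~\ref{clm:connection} together with the monotone non-increase of \rrtstar costs, and use $\EE^3_n$ to bound the total deviation, with $\alpha,\beta<\theta\eps/16$ absorbing the slack. The only differences are cosmetic: you bound the path length by the triangle inequality through the anchors $x_i$ rather than the paper's case analysis on consecutive pairs, and you treat the endpoints $s,t$ slightly more explicitly than the paper (which simply sets $X_{j_1}=s$, $X_{j_{M_n}}=t$); both yield the same bound.
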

\begin{proof}
  As $\EE^2_n\wedge \EE^3_n$ we may define the sequence of vertices $X_{j_1},\ldots, X_{j_{M_n}}\in V$, such that $X_{j_1}=s,X_{j_{M_n}}=t$, and for every $1< i<M_n$, $X_{j_i}\in V_i\cap B_{n,i}^\beta$  if $ V_i\cap B_{n,i}^\beta\neq \emptyset$, and $X_{j_i}\in V_i\cap B_{n,i}$ otherwise.  

Denote by $\sigma'_n$ the path induced by concatenating those points, and notice that it is collision free by definition of $B_{n,i}$ and $\sigma_\eps$. Next, we claim that the cost of the path $\sigma_n$ obtained by \rrtstar is upper-bounded by the cost of $\sigma'_n$, which is equal to $\sum_{i=2}^{M_n}\|X_{j_i}-X_{j_{i-1}}\|$. Consider iteration $j_i$ of \rrtstar, for $1< i\leq M_n$ and observe that (i) $\xnew^{j_i}=X_{j_i}$, (ii) $X_{j_{i-1}}\in \Xnear^{j_i}$. By Claim~\ref{clm:connection}, it follows that $\cost(X_{j_i})\leq \sum_{k=2}^{i}\|X_{j(k)}-X_{j(k-1)}\|$, as desired. Thus, $c(\sigma_n)\leq c(\sigma_n')$.

We proceed to bound $c(\sigma_n')$. Observe that for any $1< i \leq M_n$ it holds that $\|X_{j_i}-X_{j_{i-1}}\|$ is at most
{\small \[ 
\begin{cases}
  \frac{\theta r(n)}{2+\theta}+ \frac{\beta r(n)}{2+\theta} + \frac{\beta r(n)}{2+\theta}, &  \text{if } X_{j_{i-1}}\in B_{n,i-1}^\beta \text{ AND }  X_{j_i}\in B_{n,i}^\beta\\
  \frac{\theta r(n)}{2+\theta}+ \frac{\beta r(n)}{2+\theta} + \frac{r(n)}{2+\theta}, &  \text{if } X_{j_{i-1}}\in B_{n,i-1}^\beta \text{ XOR } X_{j_i}\in B_{n,i}^\beta\\
  \frac{\theta r(n)}{2+\theta}+ \frac{r(n)}{2+\theta} + \frac{r(n)}{2+\theta}, &  \text{otherwise}\\
  \end{cases}
.\]}
Thus,
{\small \begin{align*}
  &c(\sigma'_n) \leq \sum_{i=2}^{M_n}\|X_{j_i}-X_{j_{i-1}}\| \\ 
  & \leq (M_{n}-1)\tfrac{\theta r(n)}{2+\theta}+\lceil(1-\alpha)(M_n-1)\rceil \tfrac{2\beta r(n)}{2+\theta}\\ & \quad \quad + \lfloor\alpha(M_n-1)\rfloor \tfrac{2 r(n)}{2+\theta} 
 \leq (M_n-1)r(n)\frac{\theta +2\beta +2\alpha }{2+\theta} \\ & \leq \frac{c(\sigma_\eps)(2+\theta)}{\theta r(n)} r(n)  \frac{\theta +2\beta +2\alpha }{2+\theta}  \leq \left(1+\tfrac{\eps}{4}\right)c^* \cdot  \frac{\theta +2\beta +2\alpha}{\theta}  \\ &< \left(1+\tfrac{\eps}{4}\right)c^* \frac{\theta +\tfrac{2\theta \eps}{16} +\frac{2\theta \eps}{16}}{\theta} = \left(1+\tfrac{\eps}{4}\right)^2c^* \\ & =\left(1+\tfrac{\eps}{2}+\tfrac{\eps^2}{16}\right) c^*  
 < \left(1+\tfrac{\eps}{2}+\tfrac{\eps}{16}\right) c^* < (1+\eps)c^*.\qedhere
\end{align*}} 
\end{proof}

It remains to show that $\EE^2\wedge \EE^3$ occurs with probability approaching $1$:
\begin{align*}
  \limn\Pr[\EE^2\wedge \EE^3] & =1- \limn \Pr[\overline{\EE^2}\vee \overline{\EE^3}] 
   \\ &\geq 1- \limn \left(\Pr[\overline{\EE^2}] + \Pr[\overline{\EE^3}]\right) = 1.
\end{align*}

\section{Conclusion}\label{sec:conclusion}
In this paper we revisited the original asymptotic-optimality proof of \rrtstar in~\cite{KF11}, and discussed an apparent logical gap within it. We then introduced an alternative proof that amends this logical gap. Our new proof suggests that the connection radius of \rrtstar should be slightly larger 
than the original bound on the radius that was developed in~\cite{KF11}. We leave the question of whether our bound is tight, i.e., whether the exponent of $1/(d+1)$ in Equation~\eqref{eq:r_n} can be lowered to $1/d$, to future research. The practical successes of the algorithm and its extensions, using the exponent $1/d$, provide some evidence that this might be the case.

\section*{Acknowledgments}
We thank Sertac Karaman for insightful discussions on his work~\cite{KF11}. We also thank Michal Kleinbort for feedback on the manuscript. This work was supported in part by NSF, Award Number: 1931815.

\appendix
We provide a detailed counter example (Figures~\ref{fig:ex1}-\ref{fig:ex10}) illustrating our argument that the fact that for every $1\leq i < M_n$ (i) there exist $X_{j_i},X_{j_{i+1}}$ such that $X_{j_i}\in B_{n,i},X_{j_{i+1}}\in B_{n,i+1}$ and (ii)  $j_i<j_{i+1}$, does not necessarily mean that (iii) there exists a sequence $j_1\leq j_2\leq \ldots \leq j_{M_n}$ such that $X_{j_i}\in B_{n,i}$ for every $1\leq i < M_n$ (see Section~\ref{sec:issue}). 

\begin{figure*}
  \centering
    \includegraphics[width=1.2\columnwidth, page=1]{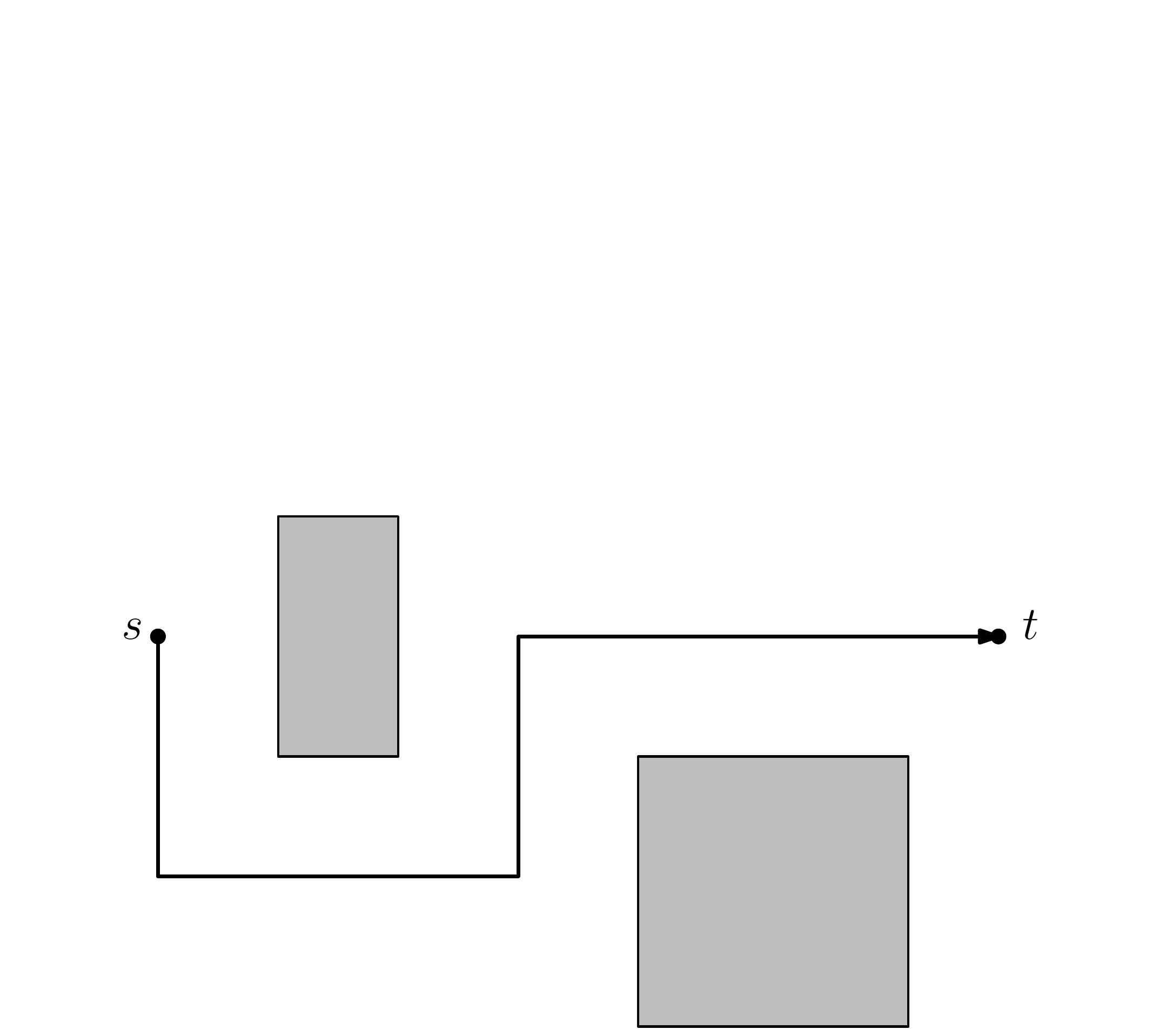}
  \caption{The input scenario for the counter example. The goal is to find a path from configuration $s$ on the left to $t$ on the right, while avoiding the two gray obstacles. The path $\sigma_{\eps}$ is drawn as a black curve.}
  \label{fig:ex1}
\end{figure*}
\begin{figure*}
  \centering
    \includegraphics[width=1.2\columnwidth, page=2]{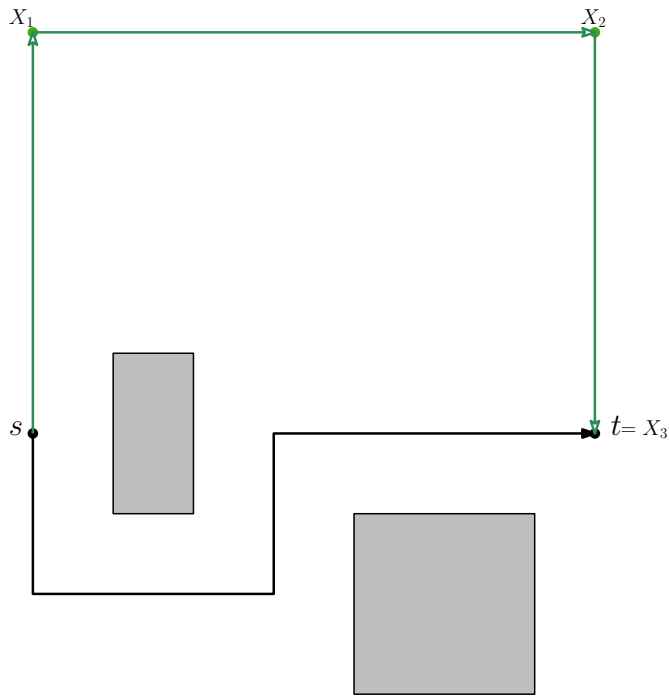}
  \caption{The first three samples $X_1,X_2,X_3$ are drawn by the algorithm, where $X_3=t$. The edge $(s,X_1)$ is added first through line~5 of Algorithm~\ref{alg:rrt_star}. The edges $(X_1,X_2),(X_2,X_3)$ are added in a similar fashion. We  assume that no rewiring occurs in those steps due to the smaller magnitude of $r_1$ in comparison to $\|X_1-X_3\|$. We also mention that the length of the new path to $t$ just discovered can be made arbitrarily long with respect to $\sigma_\eps$ by moving $X_1,X_2$ further away from $s$ and $t$ respectively, and setting the steering parameter $\eta$ to be large enough to support such long connections. }
  \label{fig:ex2}
\end{figure*}
\begin{figure*}
  \centering
    \includegraphics[width=1.2\columnwidth, page=3]{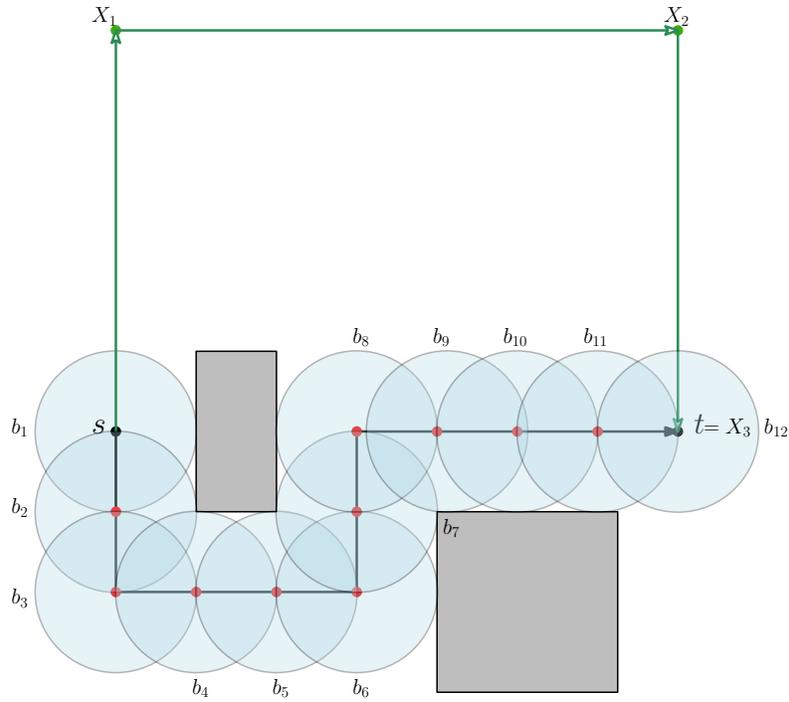}
  \caption{We construct a sequence of $M_{n}$ balls $B_{n,1},\ldots, B_{n,M_n}$, which we denote for simplicity by $b_1,\ldots, b_{M_n}$, and we fix $n=23$. For simplicity, we set $M_n=12$ in the illustration, to avoid unnecessarily complicating the visualization. Below we also assume that the connection radius $r_{23}$ used by \rrtstar is equal to the diameter of any ball $b_i$, although a similar out come will follow when $r_{23}$ is much larger (as long as $r_{23}<\|X_2-X_3\|$).}
  \label{fig:ex3}
\end{figure*}
\begin{figure*}
  \centering
    \includegraphics[width=1.2\columnwidth, page=4]{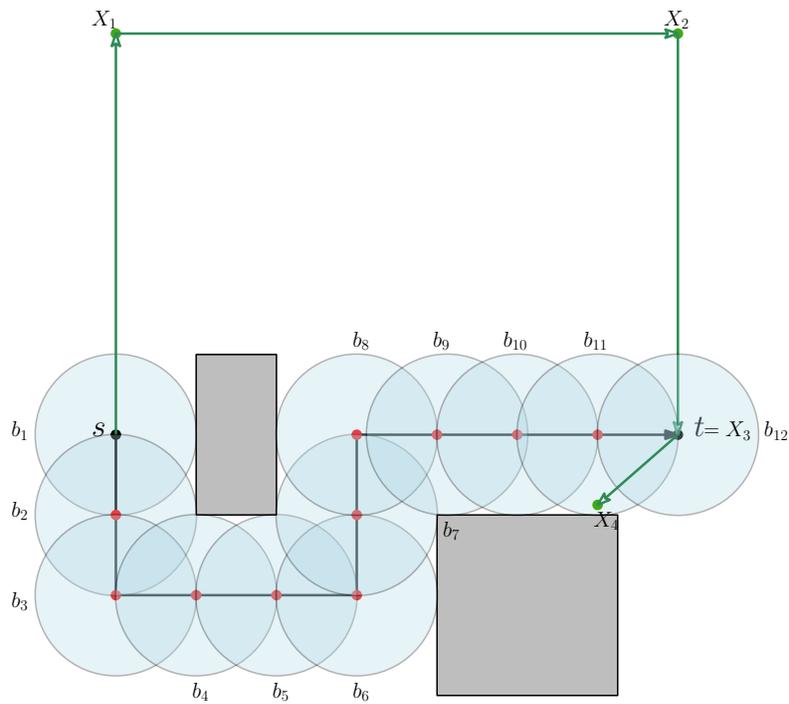}
  \caption{Next, we generate the sample $X_4\in b_{11}$, which introduces the edge $(X_3,X_4)$ and does not result in rewiring. 
As we mentioned earlier, we assume that $r_{n}<\|X_2-X_3\|$, which implies that the edge $(X_2,X_4)$ will not be considered.}
  \label{fig:ex4}
\end{figure*}
\begin{figure*}
  \centering
    \includegraphics[width=1.2\columnwidth, page=5]{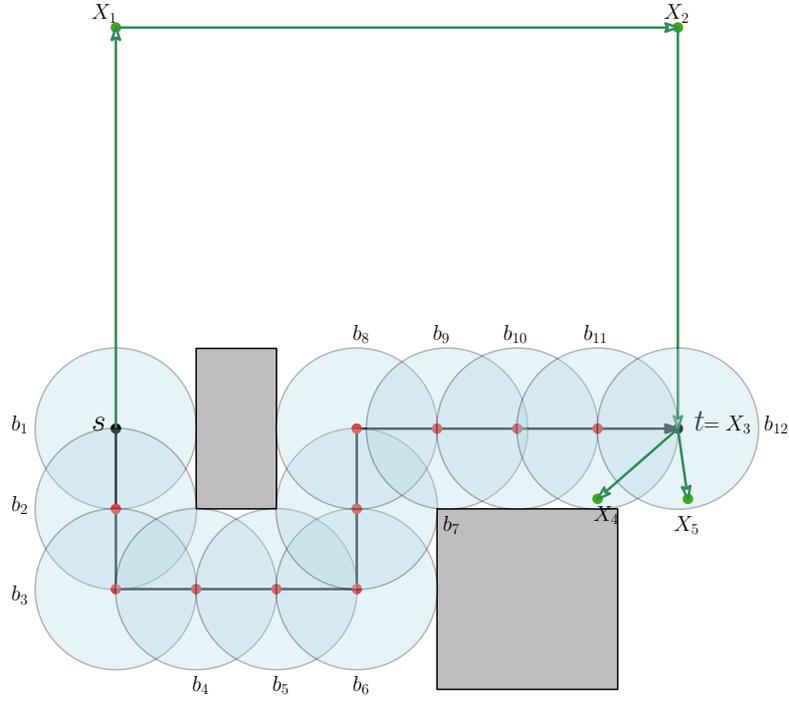}
  \caption{Next, we generate the sample $X_5\in b_{12}$, which introduces the edge $(X_4,X_5)$, since the cost-to-come via $X_3$ is smaller than through a connection from $X_4$. Clearly, the edge $(X_5,X_4)$ cannot improve the cost-to-come of $X_4$, and it is therefore not added in the rewiring stage. Observe that $X_4\in b_{11},X_5\in b_{12}$, and $X_4$ was sampled before $X_5$, which implies that conditions (i), (ii) are satisfied locally for $b_{11},b_{12}$.}
  \label{fig:ex5}
\end{figure*}

\begin{figure*}
  \centering
    \includegraphics[width=1.2\columnwidth, page=6]{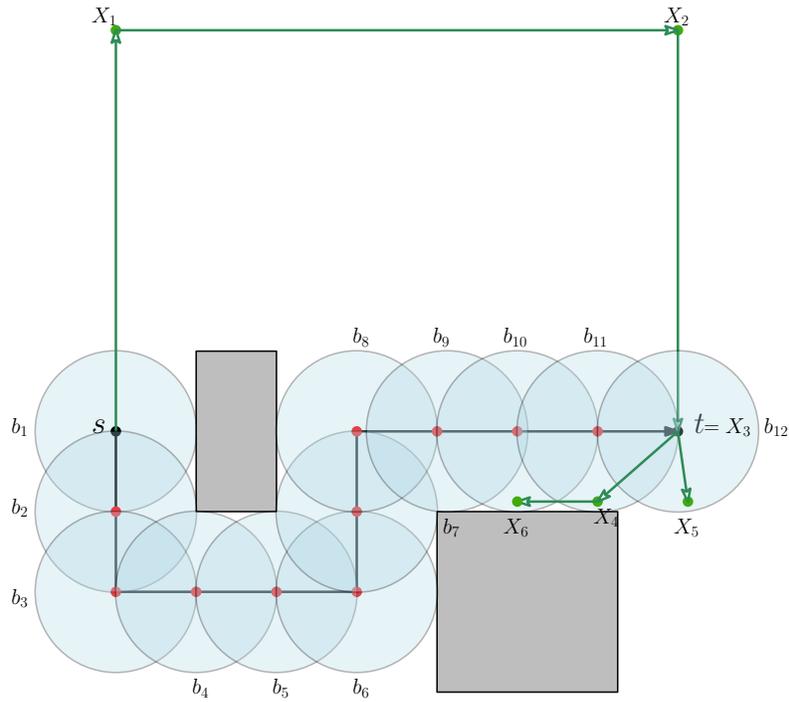}
  \caption{Similarly to $X_4$, the sample $X_6$ is produced in $b_{10}$, which yields the edge $(X_4,X_6)$, and introduces no rewiring. }
  \label{fig:ex6}
\end{figure*}

\begin{figure*}
  \centering
    \includegraphics[width=1.2\columnwidth, page=7]{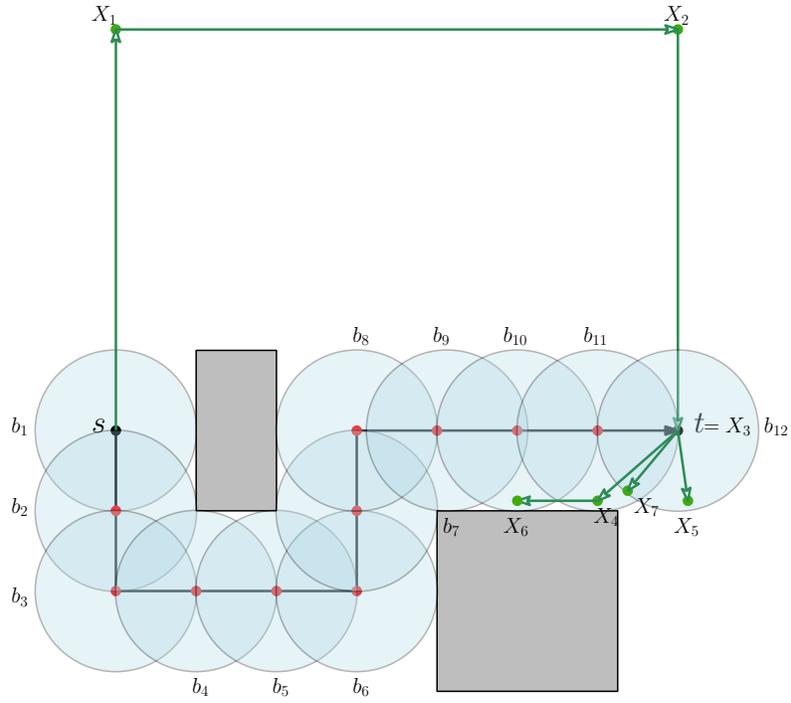}
  \caption{Similarly to $X_5$, the sample $X_7$ is produced in $b_{11}$, and the edge $(X_4,X_6)$ is left intact.}
  \label{fig:ex7}
\end{figure*}

\begin{figure*}
  \centering
    \includegraphics[width=1.2\columnwidth, page=8]{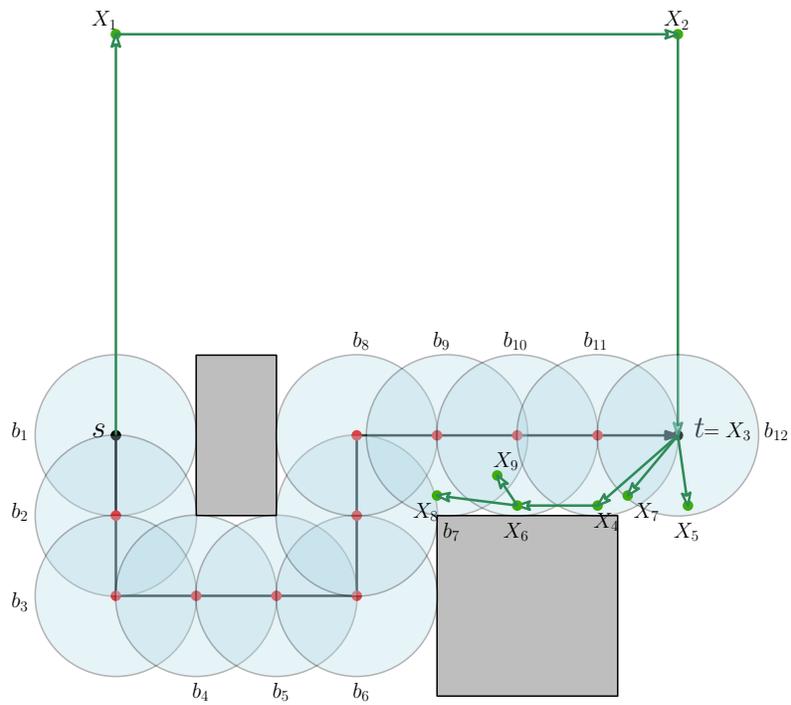}
  \caption{In a similar manner, we introduce incrementally the samples $X_8\in b_9, X_9\in b_{10}$. Notice that a path from $t$ in the opposite direction of the balls towards $s$  (currently till $X_8$) is beginning to form. }
  \label{fig:ex8}
\end{figure*}

\begin{figure*}
  \centering
    \includegraphics[width=1.2\columnwidth, page=9]{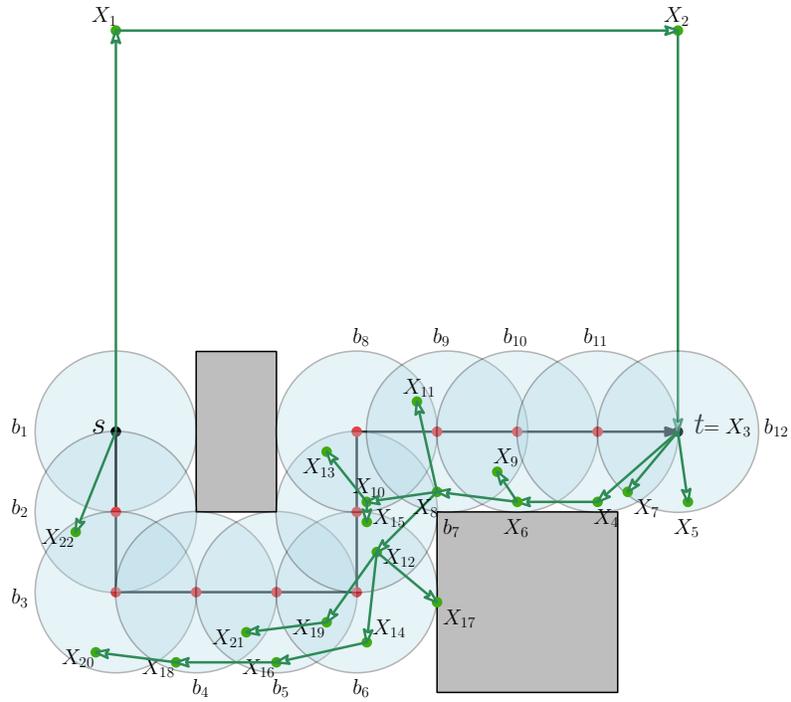}
  \caption{This sample scheme can be repeated until the sample $X_{22}\in b_2$ is produced, which is within $r_{23}$ distance from $s$. This introduces the edge $(s,X_{22})$, which minimizes the cost-to-come to $X_{22}$.}
  \label{fig:ex9}
\end{figure*}

\begin{figure*}
  \centering
    \includegraphics[width=1.2\columnwidth, page=10]{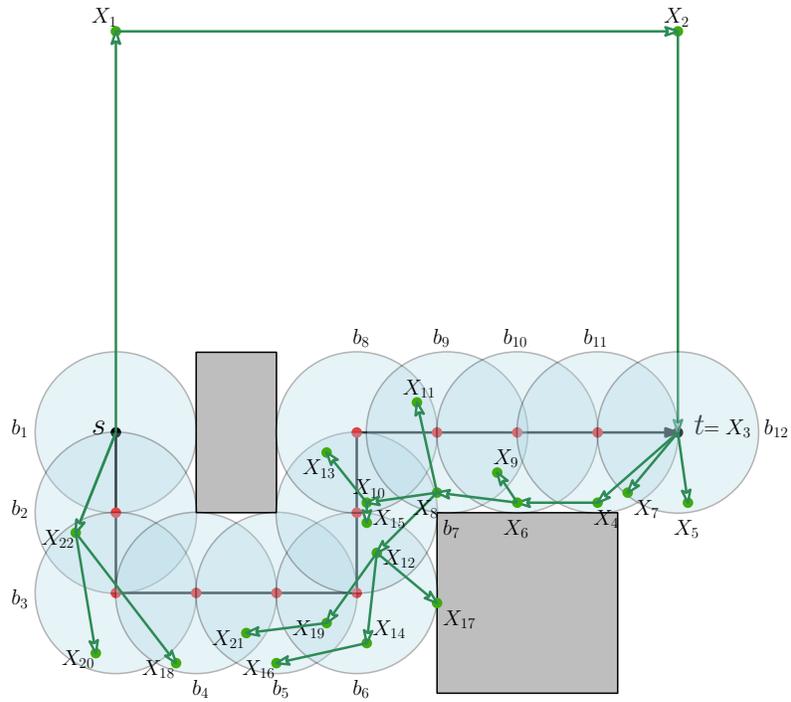}
  \caption{The introduction of $X_{22}$ forces a rewire of $X_{20}$ and $X_{18}$ within the same iteration: the edges $(X_{22},X_{20}),(X_{22},X_{18})$ are added, whereas $(X_{18},X_{20})$ and $(X_{16},X_{18})$ are removed. It is important to note that by definition of \rrtstar, this rewire does not promote further rewiring to the predecessors of $X_{18}, X_{20}$ in $G$. }
  \label{fig:ex9}
\end{figure*}

\begin{figure*}
  \centering \includegraphics[width=1.2\columnwidth, page=11]{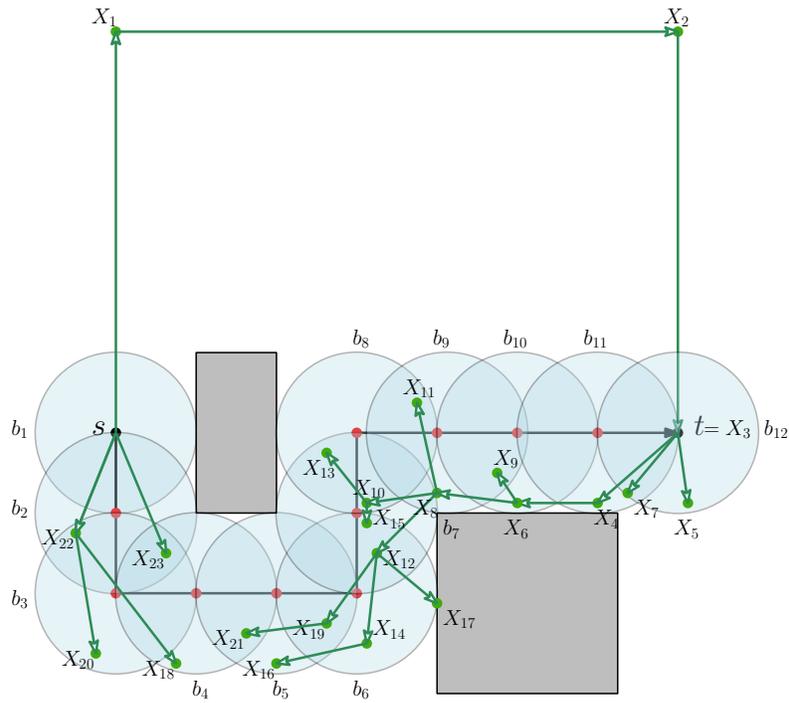}
  \caption{Finally, the sample $X_{23}\in b_3$ is drawn, and the edge $(s,X_{23})$ is added. This will promote additional rewires to $X_{16}, X_{19}, X_{18},X_{21}$,in the vicinity of $X_{23}$, although as earlier those rewires will not propagate to other vertices of $G$. 
It is clear that at this point for every $1\leq i<M_n$ it holds that there exist (i) $X_{j_i}\in b_i,X_{j_{i+1}}\in b_{i+1}$ and (ii) $j_i<j_{i+1}$. Unfortunately, the graph $G$ does not contain a path starting at $s$ and going sequentially through the balls $b_1,\ldots, b_{12}$ until $t$ is reached. To conclude, even though conditions (i), (ii) hold, \rrtstar will return the path consisting of the vertices $s,X_1,X_2,t$, which is substantially longer than $\sigma_\eps$.}
  \label{fig:ex10}
\end{figure*}

\bibliographystyle{IEEEtran}
\bibliography{bibliography}

\end{document}